\newtheorem{theorem}{Theorem}
\newtheorem{corollary}[theorem]{Corollary}
\newtheorem{claim}[theorem]{Claim}
\newenvironment{psketch}{%
  \proof}{\endproof}
\newcommand{\bt}[2]{\mathsf{bt}(#1,#2)}
\title{Goal-Driven Reasoning in DatalogMTL with Magic Sets}
\author {
    Shaoyu Wang\textsuperscript{\rm 1}\equalcontrib,
    Kaiyue Zhao\textsuperscript{\rm 1}\equalcontrib,
    Dongliang Wei\textsuperscript{\rm 1}\equalcontrib, \\
    Przemys{\l}aw Andrzej Wa{\l}\c{e}ga\textsuperscript{\rm 2,\rm 3}, 
    Dingmin Wang\textsuperscript{\rm 2},
    Hongming Cai\textsuperscript{\rm 1},
    Pan Hu\textsuperscript{\rm 1}\thanks{Corresponding author}
}
\begin{document}

\maketitle

\begin{abstract}
DatalogMTL is a powerful rule-based language for temporal reasoning. Due to its high expressive power and flexible modeling capabilities, it is suitable for a wide range of applications, including tasks from industrial and financial sectors. However, due to its high computational complexity, practical reasoning in DatalogMTL is highly challenging. To address this difficulty, we introduce a new reasoning method for DatalogMTL which exploits the magic sets technique---a rewriting approach developed for (non-temporal) Datalog to simulate top-down evaluation with bottom-up reasoning. We have implemented this approach and evaluated it on publicly available benchmarks, showing that the proposed approach significantly and consistently outperformed state-of-the-art reasoning techniques.
\end{abstract}
\section{Introduction}

DatalogMTL \cite{DBLP:journals/jair/BrandtKRXZ18} extends the well-known declarative logic programming language Datalog \cite{DBLP:journals/tkde/CeriGT89} with operators from metric temporal logic (MTL) \cite{DBLP:journals/rts/Koymans90}, allowing for complex temporal reasoning. 
It has a number of applications, including ontology-based query answering~\cite{DBLP:journals/jair/BrandtKRXZ18,guzel2018ontop}, stream
reasoning~\cite{walkega2023stream,walega2019reasoning}, and reasoning for the financial sector \cite{DBLP:conf/edbt/ColomboBCL23,DBLP:conf/edbt/NisslS22,mori2022neural}, among others. 

To illustrate capabilities of DatalogMTL, consider a scenario in which we want to reason about social media interactions.
The following DatalogMTL rule
describes participation of users in the circulation of a viral social media post:
\begin{align}\label{rule:ex}
    \boxplus_{[0, 2]} P(x) \xleftarrow{} I(x, y) \wedge P(y),
\end{align}
namely, it states that if at a  time point $t$ a user $x$ interacted with a user $y$ over the post (expressed as $I(x,y)$), and $y$ participated in the post's circulation (expressed as $P(y)$), then in the time interval $[t, t+2]$, user $x$ will be continuously participating in the post's circulation ($\boxplus_{[0, 2]} P(x)$).

One of the main reasoning tasks considered in DatalogMTL is fact entailment, which involves checking whether a program-dataset pair $(\Pi, \mathcal{D})$ logically entails a given temporal fact. 
This task was shown to be \textsc{ExpSpace}-complete in combined complexity~\cite{DBLP:journals/jair/BrandtKRXZ18} and \textsc{PSpace}-complete in data complexity~\cite{DBLP:conf/ijcai/WalegaGKK19}. 
To decrease computational complexity, various syntactical~\cite{DBLP:conf/ijcai/WalegaGKK20} and semantic~\cite{DBLP:conf/kr/WalegaGKK20} modifications of DatalogMTL have been introduced.
DatalogMTL was also extended with stratified negation~\cite{DBLP:conf/aaai/CucalaWGK21}, non-stratified negation~\cite{DBLP:conf/kr/WalegaCKG21,walkega2024stable}, and  with temporal aggregation \cite{DBLP:conf/ruleml/BellomariniNS21}. 

A sound and complete reasoning approach for DatalogMTL can be obtained using an automata-based construction \cite{DBLP:conf/ijcai/WalegaGKK19}. 
Another approach for reasoning is to apply materialisation, that is,  successively apply to the initial dataset $\mathcal{D}$ rules in a given program $\Pi$, to derive all the facts entailed by a program-dataset pair $(\Pi, \mathcal{D})$.
It is  implemented, for example, within Vadalog reasoner~\cite{DBLP:conf/ruleml/BellomariniBNS22}. 
Materialisation, however, does not  guarantee termination since DatalogMTL programs can express (infinite) recursion over time. 
To marry completeness and efficiency of reasoning, a combination of automata- and materialisation-based  approaches was introduced in the MeTeoR system \cite{DBLP:conf/aaai/WangHWG22}. 
Materialisation-based approach typically outperforms the automata-based approach, so
cases in which  materialisation-based reasoning is sufficient were studied \cite{walkega2021finitely,walkega2023finite}.
More recently, a very efficient reasoning approach was introduced, which combines materialisation with detection of repeating periods \cite{DBLP:conf/aaai/WalegaZWG23}.

Despite recent advancement in the area, the currently available reasoning methods for DatalogMTL  \cite{DBLP:conf/aaai/WangHWG22, DBLP:conf/aaai/WalegaZWG23,DBLP:conf/ruleml/BellomariniBNS22} can suffer from deriving huge amounts of temporal facts which are irrelevant for a particular query. 
This, in turn, can make the set of derived facts too large to store in memory or even in a secondary storage. 
For example, 
if the input dataset mentions millions of users, together with  interactions between them and their participation in posts' circulation, application of Rule~\eqref{rule:ex} may result in a huge set of derived facts.
However, if we only want to check whether the user Arthur participated in the post's circulation on September 10th, 2023, most of the derived facts are irrelevant.
Note that determining which exactly facts  are relevant is not easy, especially when a DatalogMTL program contains a number of rules with complex interdependencies.

To address these challenges, we take inspiration from the \emph{magic set} rewriting technique, which was developed for non-temporal Datalog \cite{DBLP:books/aw/AbiteboulHV95, DBLP:books/cs/Ullman89,DBLP:journals/jcss/FaberGL07,magicPerf}.
We extend it to the temporal setting of DatalogMTL, thereby proposing a new variant of magic set rewriting. 
For example, consider again Rule~\eqref{rule:ex} and assume that we want to check if Arthur participated in the post's circulation on $10/9/2023$.
Observe that if there is no chain of interactions between Arthur and another user Beatrice in the input dataset, then  facts about Beatrice are irrelevant for our query, and so, we do not need to derive them.
We will achieve it by
rewriting the original program-dataset pair into a new pair (using additional magic predicates)
such that the new pair provides the same answers to the query as the original pair, but 
allows us for a more efficient reasoning.
As a result we obtain a goal-driven (i.e. query-driven) reasoning approach. To the best of our knowledge, our work is the first that allows to prune irrelevant derivations in DatalogMTL reasoning.

To implement the magic sets technique in the DatalogMTL setting, two main challenges need to be addressed. 
First, it is unclear how MTL operators affect sideways information passing in magic sets. 
We address this challenge by carefully examining each type of MTL operators and designing corresponding transformations. 
Second, for the transformed program-dataset pair established optimisations for reasoning in DatalogMTL may no longer apply or become less efficient. 
This is indeed the case, and so, we show how these optimisations can be adapted to our new setting.
We implemented our approach and tested it using the state-of-the-art DatalogMTL solver MeTeoR. 
Experiments show that compared with existing methods, our approach achieves a significant  performance improvement.

\section{Preliminaries}
We briefly summarise the core concepts of DatalogMTL and provide a recapitulation of magic set rewriting in the non-temporal setting. 

\subsubsection{Syntax of DatalogMTL.}
Throughout this paper, we assume that
time is continuous, in particular, that the
timeline is composed of rational numbers. 
A time interval is a set of continuous time points $\varrho$ of the form $\langle t_1, t_2\rangle$, where $t_1,t_2\in \mathbb{Q}  \cup  \{-\infty, \infty\}$, whereas 
$\langle$ is $[$ or $($ and likewise $\rangle$ is $]$ or $)$.
An interval is punctual if it is of the form $[t, t]$, for some $t \in \mathbb{Q}$; 
we will often represent it as $t$. An interval is bounded if both of its endpoints are rational numbers (i.e. neither $\infty$ or $-\infty$). 
Although time points are rational numbers, we will sometimes use dates instead. For example, $10/9/2023$ (formally it can be treated as the number of days that passed since $01/01/0000$).
When it is clear from the context, we may abuse distinction between intervals (i.e. sets of time points) and their representation $\langle t_1, t_2\rangle$.

Objects are represented by terms: 
a term is either a variable or a constant. 
A relational atom is an expression $R(\bm{t})$, where $R$ is a predicate and $\bm{t}$ is a tuple of terms of arity matching $R$. 
Metric atoms extend relational atoms by allowing for operators from metric temporal logic (MTL), namely, $\boxplus_\varrho$, $\boxminus_\varrho$,
$\diamondplus_\varrho$, 
$\diamondminus_\varrho$, 
$\mathcal{U}_\varrho$, and $\mathcal{S}_\varrho$, with arbitrary intervals $\varrho$.
Formally, metric atoms, $M$, are   generated by the  grammar
\begin{align*}
    M ::= &\bot \mid \top \mid R(\bm{t}) \mid \boxplus_{\varrho}M \mid \boxminus_{\varrho}M \mid \diamondplus_{\varrho}M \mid \diamondminus_{\varrho}M \mid {}\\ 
    & M\mathcal{U}_{\varrho}M \mid M \mathcal{S}_{\varrho} M,
\end{align*}
where $\top$ and $\bot$ are constants for, respectively,   truth and falsehood, whereas $\varrho$ are any intervals containing only non-negative rationals. 
A DatalogMTL rule, $r$,  is of the form
\begin{align*}
    M' \leftarrow M_1 \land M_2\land \dots \land M_n, \quad \text{for $n \geq 1$},
\end{align*}
where  each  $M_i$ is a metric atom and $M'$ is a metric atom not mentioning $\bot$, $\diamondplus$, $\diamondminus$, $\mathcal{U}$, and   $\mathcal{S}$. 
Note that similarly as ~\citet{DBLP:conf/aaai/WalegaZWG23}, we do not allow $\bot$ in $M'$ to ensure consistency and focus on derivation of facts.
We call $M'$ the head of $r$ and the set ${\{M_i \mid i \in \{1, \dots , n\} \}}$ the body of $r$, and represent them as $head(r)$ and $body(r)$, respectively. 
For an example of a DatalogMTL rule see Rule~\eqref{rule:ex}.

We call a predicate intensional (or \textit{idb}) in $\Pi$ if it  appears in the head of some rule in $\Pi$, and we call it extensional (or \textit{edb}) in $\Pi$ otherwise. 
Rule $r$ is safe if each variable in $r$'s head appears also in $r$'s body. 
A DatalogMTL program, $\Pi$, is a finite set of safe rules. 
An expression (e.g. a relational atom, a rule, or a program) is ground if it does not mention any variables. 
A fact is of the form $R(\bm{t})@\varrho$, where $R(\bm{t})$ is a ground relational atom and $\varrho$ is an  interval. 
A query is  of a similar form as a fact, but its relational atom $R(\bm{t})$ does not need to be ground.
A dataset, $\mathcal{D}$, is a finite set of facts. 
A dataset or program is bounded, if all the intervals they mention are bounded.

\subsubsection{Semantics of DatalogMTL.}

A DatalogMTL interpretation $\mathfrak{I}$ is a function which maps each time point $t \in \mathbb{Q}$ to a set of ground relational atoms (namely to atoms which hold at $t$).
If $R(\bm{t})$ belongs to this set, we write ${\mathfrak{I}, t \models R(\bm{t})}$. 
This extends to metric atoms as presented in Table~\ref{tab:semantable}.
\setlength{\tabcolsep}{3pt}
\begin{table}[b]
    \centering
\begin{tabular}{|lll|}
    \hline
    $\mathfrak{I}, t  \vDash \top$& & \text{for every $t \in \mathbb{Q}$}  \\
    $\mathfrak{I}, t  \vDash \bot$ && \text{for no $t \in \mathbb{Q}$}\\
     $\mathfrak{I}, t \vDash \boxplus_\varrho M$ & \text{iff} &
    \text{$\mathfrak{I}, t_1 \vDash M$ for all
    $t_1 $ s.t. $t_1-t \in \varrho$}\\
     $\mathfrak{I}, t \vDash \boxminus_\varrho M$ & \text{iff}&  \text{$\mathfrak{I}, t_1 \vDash M$ for all $t_1$ s.t. $t-t_1 \in \varrho$}\\
     $\mathfrak{I}, t \vDash \diamondplus_\varrho M$ & \text{iff}&  \text{$\mathfrak{I}, t_1 \vDash M$ for some $t_1 $ s.t. $t_1-t \in \varrho$}\\
     $\mathfrak{I}, t \vDash \diamondminus_\varrho M$ & \text{iff}&  \text{$\mathfrak{I}, t_1 \vDash M$ for some $t_1 $ s.t. $t-t_1 \in \varrho$}\\
     $\mathfrak{I}, t \vDash M_2 \mathcal{U}_\varrho M_1$ & \text{iff} &  \text{$\mathfrak{I}, t_1 \vDash M_1$ for some  $t_1$ s.t. $t_1 - t \in \varrho$,} \\
    &&     \text{and $\mathfrak{I}, t_2 \vDash M_2$ for all  $t_2 \in (t, t_1)$}\\    
     $\mathfrak{I}, t \vDash M_2 \mathcal{S}_\varrho M_1$ & \text{iff} &  \text{$\mathfrak{I}, t_1 \vDash M_1$ for some $t_1$ s.t. $t - t_1 \in \varrho$,} \\ 
    &&  \text{and $\mathfrak{I}, t_2 \vDash M_2$ for all $t_2 \in (t_1, t)$}
    \\\hline
\end{tabular}
    \caption{Semantics for ground metric atoms}
    \label{tab:semantable}
\end{table}

Interpretation $\mathfrak{I}$ satisfies a fact $R(\bm{t})@\varrho$, if ${\mathfrak{I}, t \vDash R(\bm{t})}$ for each ${t \in \varrho}$,
and $\mathfrak{I}$ satisfies  a set $B$ of ground metric atoms, in symbols ${\mathfrak{I}, t \vDash B}$, if ${\mathfrak{I}, t \vDash M}$ for each ${M \in B}$.
Interpretation $\mathfrak{I}$ satisfies a ground rule $r$ if ${\mathfrak{I}, t \vDash body(r)}$ implies ${\mathfrak{I}, t \vDash head(r)}$ for each ${t \in \mathbb{Q}}$. 
A ground instance of $r$ is any ground rule $r'$ such that there is a substitution $\sigma$ mapping variables into constants so that $r' = \sigma(r)$ (for any expression $e$, we will use  $\sigma(e)$ to represent the expression obtained by applying $\sigma$ to all variables in $e$).
Interpretation $\mathfrak{I}$ satisfies a rule $r$, if it satisfies all ground instances of $r$,
and it is a model of a program $\Pi$, if it satisfies all rules of $\Pi$.
If  $\mathfrak{I}$ satisfies every fact in a dataset $\mathcal{D}$, then $\mathfrak{I}$ is a model of $\mathcal{D}$. 
$\mathfrak{I}$ is a model of a pair $(\Pi, \mathcal{D})$ if $\mathfrak{I}$ is both a model of $\Pi$ and a model of $\mathcal{D}$. 
A pair $(\Pi, \mathcal{D})$ entails a fact $R(\bm{t})@\varrho$, in symbols $(\Pi, \mathcal{D}) \models R(\bm{t})@\varrho$, if all  models of $(\Pi, \mathcal{D})$ satisfy $R(\bm{t})@\varrho$. 
Given $(\Pi,\mathcal{D})$, an answer to a query $q$ is any fact $R(\bm{t})@\varrho$ such that $(\Pi,\mathcal{D})$ entails $R(\bm{t})@\varrho$ and $R(\bm{t})@\varrho = \sigma(q)$, for some substitution $\sigma$.
An interpretation $\mathfrak{I}$ contains interpretation $\mathfrak{I}'$ if $\mathfrak{I}$ satisfies every fact that $\mathfrak{I}'$ does and $\mathfrak{I}=\mathfrak{I'}$ if they contain each other.
Each dataset $\mathcal{D}$ has a unique least interpretation
$\mathfrak{I}_{\mathcal{D}}$, which is the minimal (with respect to containment) model of $\mathcal{D}$. 

Given a program $\Pi$, 
we define the immediate consequence operator $T_\Pi$ as  a function  mapping an interpretation $\mathfrak{I}$ into the least interpretation containing
$\mathfrak{I}$ and satisfying for each time point $t$ and each
ground instance $r$ (which does not mention $\bot$ in the head) of a rule in $\Pi$ the following: 
$\mathfrak{I},t \models body(r)$ implies $T_{\Pi}(\mathfrak{I}),t \models head(r)$. Now we are ready to define the canonical interpretation for a program-dataset pair $(\Pi, \mathcal{D})$. Repeated application of $T_\Pi$ to $\mathfrak{I}_\mathcal{D}$ yields a transfinite sequence of interpretations $\mathfrak{I}_0, \mathfrak{I}_1, \dots $ such that
\begin{enumerate*}[label=(\roman*)]
    \item $\mathfrak{I}_0 = \mathfrak{I}_\mathcal{D}$,
    \item $\mathfrak{I}_{\alpha+1} = T_\Pi(\mathfrak{I}_\alpha)$ for $\alpha$ a successor ordinal, and
    \item $\mathfrak{I}_\beta = \bigcup_{\alpha < \beta}\mathfrak{I}_\alpha$ for $\beta$ a limit ordinal and  successor ordinals $\alpha$ (here union of two interpretations is the least interpretations satisfying all facts satisfied in these interpretations)
\end{enumerate*}; then, $\mathfrak{I}_{\omega_1}$, where $\omega_1$ is the first uncountable ordinal, is the canonical interpretation of $(\Pi, \mathcal{D})$, denoted as ${\mathfrak{C}_{\Pi, \mathcal{D}}}$ \cite{DBLP:journals/jair/BrandtKRXZ18}. 

\subsubsection{Reasoning Task.}
We focus on fact entailment, the main reasoning task in DatalogMTL, which involves checking whether a given pair of DatalogMTL program and dataset $(\Pi, \mathcal{D})$ entails a fact $R(\bm{t})@\varrho$. 
This task is \textsc{ExpSpace}-complete \cite{DBLP:journals/jair/BrandtKRXZ18} in combined complexity and \textsc{PSpace}-complete in data complexity \cite{DBLP:conf/ijcai/WalegaGKK19}.

\subsubsection{Magic Set Rewriting.}
We now  recapitulate magic set rewriting for Datalog.
For a detailed description of magic sets we refer the reader to the work of \citet{DBLP:books/aw/AbiteboulHV95, DBLP:books/cs/Ullman89,DBLP:journals/jcss/FaberGL07,magicPerf}.
Datalog can be seen as a fragment of DatalogMTL that disallows metric temporal operators and discards the notion of time. 
In the context of Datalog, given a query (which is now a relational atom) $q$ and a program-dataset pair $(\Pi, \mathcal{D})$,  the magic set approach  constructs a pair $(\Pi',\mathcal{D}')$ such that $(\Pi', \mathcal{D}')$  and $(\Pi, \mathcal{D})$
provide the same answers to $q$.

The first step of magic set rewriting involves program adornment (adding  superscripts to predicates). 
Adornments will guide construction of $\Pi'$.
The adorned program $\Pi_a$ 
is constructed as follows:
\begin{itemize}
    \item  Step 1: assume that the query is of the form $q= Q(\bm{t})$. We adorn  $Q$ with a string $\gamma$ of letters \textit{b} and \textit{f}, which stand for `bound' and `free' terms. The length of $\gamma$ is the arity of $Q$; 
    the $i$th element of $\gamma$ is \textit{b} if the $i$th term in $\bm{t}$ is a constant, and \textit{f} otherwise.
    For example $Q(x,y,\mathit{Arthur})$ yields $Q^{\mathit{ffb}}$.
    We set  $A= \{  Q^\gamma \}$ and $\Pi_a = \emptyset$.
    \item Step 2: we remove one adorned predicate $R^\gamma$ from $A$ (in the first application of Step 2, set $A$ contains only one element, but on  later stages $A$ will be  modified).  For each rule $r$ whose head predicate is $R$, 
    we generate an adorned rule $r_a$ and add it to $\Pi_a$.
    The rule $r_a$ is constructed from $r$ as follows.
    We start by replacing the head predicate $R$ in $r$ with $R^\gamma$. 
    Next we  label all occurrences of terms in $r$ as bound or free as follows:
    \begin{enumerate}[label=(\roman*)]
        \item terms in  $head(r)$  are labelled according to $\gamma$ (i.e. the $i$th term is bound iff the $i$th element of $\gamma$ is $b$),
        \item each constant in $body(r)$ is labelled as bound,
        \item each variable  that is labelled as bound in $head(r)$  is labelled as bound in $body(r)$,
        \item if a variable occurs in a body atom, all its occurrences in the subsequent atoms (i.e. located to the right of this body atom) are labelled as bound, 
        \item all remaining variables in $body(r)$ are labelled as free.
    \end{enumerate}
    Then each body atom $P(\bm{t}'$) in $r$ is replaced with
    $P^{\gamma'}(\bm{t}')$ such that the $i$th letter of $\gamma'$ is $b$ if the $i$th term of $\bm{t}'$ is labelled as bound, and it is $f$ otherwise.
    Moreover, if $P$ is \textit{idb} in $\Pi$ and $P^{\gamma'}$ was never constructed so far, we add $P^{\gamma'}$ to $A$.
    For example if $R^\gamma$ is $Q^{\mathit{ffb}}$ and $r$ is $Q(x,Arthur,y) \xleftarrow{} P(x, Beatrice) \wedge T(x,y)$,
    then $r_a$ is 
    $Q^{\mathit{ffb}}(x,Arthur,y) \xleftarrow{} P^{\mathit{fb}}(x, Beatrice) \wedge T^{\mathit{bb}}(x,y)$.
    Then, we keep repeating Step 2 until $A$ becomes empty. 
\end{itemize}
Next, we generate the final $\Pi'$ from the above constructed $\Pi_a$. 
For a tuple of terms $\bm{t}$ and an adornment $\gamma$ of the same length, we let $\bt{\bm{t}}{ \gamma}$ the sequences of bound terms in $\bm{t}$  according to $\gamma$.
For example, $\bt{(x,y,\mathit{Arthur}}{\mathit{bfb}}=(x,\mathit{Arthur})$.
Then, for each rule $r_a\in \Pi_a$, represented as
\begin{align*}
    R^\gamma(\bm{t}) \xleftarrow{} R_1^{\gamma_1}(\bm{t}_1) \land  
    \dots \land  R_n^{\gamma_n}(\bm{t}_n),
\end{align*}
we add to $\Pi'$ the rule
\begin{align}
     R(\bm{t}) &  \gets m\_R^\gamma(\bm{t}') \land   R_1(\bm{t}_1) \land   \cdots \land   R_n(\bm{t}_n) \label{rule1}
\end{align}
and the
following rules, for all $i \in \{1, \dots, n\}$ such that $R_i$ is an \textit{idb} predicate in $\Pi$:
\begin{align}
    m\_R_i^{\gamma_i}(\bm{t}_i')  &\gets   m\_R^\gamma(\bm{t}') \land  R_1(\bm{t}_1) \land \cdots \land R_{i-1}(\bm{t}_{i-1}), \label{rule2}
\end{align}
where $\bm{t}'=\bt{\bm{t}}{\gamma}$, $\bm{t}_i'= \bt{\bm{t}_i}{\gamma_i}$, and $m\_R^\gamma$ and $m\_R_i^{\gamma_i}$ are newly introduced `magic predicates' with arities $|\bm{t}'|$ and $|\bm{t}_i'|$, respectively. 
Intuitively, magic predicates are responsible for storing tuples relevant for the derivations of query answers. 
In particular, Rule \eqref{rule1} considers only derivations that are (i) considered by the original rule $r$ and (ii) allowed  by the magic predicate $m\_R^\gamma$. 
Rule \eqref{rule2} is responsible for deriving facts with magic predicates. 

Finally, given the input query $q= Q(\bm{t})$, we construct $\mathcal{D}'$ by adding $m\_Q^\gamma(\bt{\bm{t}}{\gamma})$ to $\mathcal{D}$,
where $\gamma$ is the adornment from Step 1.
The new pair $(\Pi',\mathcal{D}')$ provides the same answers to query $q$ as $(\Pi,\mathcal{D})$, but generally allows us to provide answers to $q$ faster than $(\Pi,\mathcal{D})$.

As an example of application of magic set rewriting, consider a Datalog dataset $\mathcal{D}$, a Datalog version of Rule~\eqref{rule:ex}:
\begin{align}\label{datalogrule:ex}
     P(x) \xleftarrow{} I(x, y) \wedge P(y),
\end{align}
and a query $P(\mathit{Arthur})$. 
After applying Step 1 we get $A=\{P^b\}$ and $\Pi_a = \emptyset$. 
Then, in Step 2, we remove $P^b$ from $A$, and adorn Rule~\eqref{datalogrule:ex} according to conditions (i)--(v).
As a result, we obtain the following rule:
\begin{align*}
    P^b(x) \xleftarrow{} I^{bf}(x, y) \wedge P^b(y)
\end{align*}
as a rule in $\Pi_a$.
No predicate is added to $A$, so $A = \emptyset$, and so we do not apply Step 2 any more.
Next, we generate the following program $\Pi'$ from $\Pi_a$:
\begin{align*}
    P(x) &\xleftarrow{} m\_P^b(x) \wedge I(x, y) \wedge P(y), \\
    m\_P^b(y)&\xleftarrow{} m\_P^b(x) \wedge I(x, y).
\end{align*}
Finally, we set $\mathcal{D}' = \mathcal{D} \cup \{m\_P^b(\mathit{Arthur})\}$, which
concludes the construction of $(\Pi',\mathcal{D}')$.
The newly constructed  $(\Pi',\mathcal{D}')$ and the initial $(\Pi,\mathcal{D})$ have  the same answers to the query $P(\mathit{Arthur})$.
However, $(\Pi',\mathcal{D}')$ entails a smaller amount of facts about $P$.
Indeed, if  in the dataset $\mathcal{D}$ there is no chain of interactions $I$ connecting a constant $Beatrice$ with $Arthur$, then neither $P(Beatrice)$ nor $m\_P^b(Beatrice)$ will be  entailed by $(\Pi',\mathcal{D}')$.

\section{Magic Sets for DatalogMTL}

In this section, we explain how we extend magic set rewriting to the DatalogMTL setting. 
Since the approach is relatively complex, we start by providing  an example how, given a particular query $q$ and a  DatalogMTL program-dataset pair $(\Pi, \mathcal{D})$, we construct  $(\Pi', \mathcal{D}')$. 
For simplicity, in the example we will use  a ground query $q$ (i.e. a fact), but it is worth emphasising that our approach also supports non-ground queries.

\subsubsection{Example.}
Consider a query $q=P(\mathit{Arthur}) @  10/9$,
a DatalogMTL dataset $\mathcal{D}$, and a program consisting of the rules
\begin{align*}
    \boxplus_{[0, 2]} P(x) & \gets  I(x, y) \wedge P(y), &(r_1)\\
    \boxplus_{[0, 1]} P(x) & \gets  I(x, y) \wedge \diamondminus_{[0, 1]} P(y). &(r_2)
\end{align*}
We construct $\mathcal{D}'$ 
by adding ${m\_P^{b}(\mathit{Arthur}) @  10/9}$ to $\mathcal{D}$.
Hence, the construction of $\mathcal{D}'$ is similar to the case of (non-temporal) Datalog, but now the  fact with magic predicate mentions also the time point from the input query.
This will allow us not only to determine which atoms, but also which time points are relevant 
for answering the query. 

Next, we construct an adorned  program $\Pi_a$ as below:
\begin{align*}
    \boxplus_{[0, 2]} P^{b}(x) & \gets  I^{bf}(x, y) \wedge P^{b}(y), &(r_{a_1})\\
    \boxplus_{[0, 1]} P^{b}(x) & \gets  I^{bf}(x, y) \wedge \diamondminus_{[0, 1]} P^{b}(y). &(r_{a_2})
\end{align*}
The construction of $\Pi_a$ is analogous to that in Datalog (see the previous section), but $\Pi_a$ now contains temporal operators.
As in the case of Datalog, we will obtain $\Pi'$ by 
constructing two types of rules, per a rule in $\Pi_a$. 
However, the construction of these rules
will require further modifications.

Rules of the first type are extensions of rules in $\Pi_a$ obtained by adding atoms with  magic predicates to rule bodies.
We can observe, however, that the straightforward adaptation of Rule~\eqref{rule1} used in Datalog is not appropriate in the temporal setting.
To observe an issue, let us consider Rule~$(r_{a_1})$.
The na\"ive
adaptation of Rule~\eqref{rule1}
would construct from Rule~$(r_{a_1})$ the following rule:
$$
\boxplus_{[0, 2]} P(x)  \gets m\_P^{b}(x) \wedge I(x, y) \wedge P(y).
$$
Recall that our goal is to construct $(\Pi',\mathcal{D}')$ which will have exactly the same answers to the input query as the original $(\Pi,\mathcal{D})$.
The rule above, however, would not allow us to achieve it.
Indeed, if 
$\mathcal{D}$ consists of facts $I(Arthur, Beatrice) @ 8/9$ and  $P(Beatrice)@8/9$,
then the original Rule ($r_1$) allows us to derive $P(\mathit{Arthur}) @  [8/9,10/9]$, and so, the query fact $P(\mathit{Arthur}) @  10/9$ is derived.
However, the  rule constructed above does not allow us to derive any new fact from $\mathcal{D}'$, because we do not have ${m\_P^{b}(\mathit{Arthur}) @  8/9}$ in $\mathcal{D}'$, which is required to satisfy the rule body.
Our solution is to 
use ${\diamondplus_{[0, 2]} m\_P^{b}(x)}$ 
instead of 
$m\_P^{b}(x)$ in the rule body, namely we construct the rule 
\begin{align*}
    \boxplus_{[0, 2]} P(x) \gets \diamondplus_{[0, 2]}\mathit{m\_P}^{b}(x) \wedge I(x, y) \wedge P(y)   
    .
\end{align*}
This allows us to obtain missing derivations;
in particular, in our example above, this new rule allows us to derive $P(\mathit{Arthur}) @  [8/9,10/9]$, and so, $P(\mathit{Arthur}) @  10/9$, as required.
Similarly, for Rule~$(r_{a_2})$ we construct 
\begin{align*}
    \boxplus_{[0, 1]} P(x) \gets \diamondplus_{[0, 1]} m\_P^b(x) \wedge I(x, y) \wedge \diamondminus_{[0, 1]}P(y).
\end{align*}

Next we  construct rules of the second type,  by adapting the idea from Rule~\eqref{rule2}.
For Rule~$(r_{a_1})$ we use the form of
Rule~\eqref{rule2}, but we additionally precede the magic predicate in the body with the diamond operator, to prevent  the observed issue with missing derivations.
Hence, we obtain
\begin{align*}
\mathit{m\_P}^{b}(y) \gets \diamondplus_{[0, 2]} m\_P^{b}(x) \wedge  I(x, y).
\end{align*}
The case of Rule~$(r_{a_2})$, however, is more challenging as this rule mentions a temporal operator in the body.
Because of that, we observe that the below rule would not be appropriate
\begin{align*}
 m\_P^b(y) \gets \diamondplus_{[0,1]} m\_P^b(x)  \wedge I(x, y) .
\end{align*}
To observe the issue, let us consider a program $\Pi$, which instead of Rule $(r_1)$ has Rule $(r_3)$ which is of the form  $P(x) \gets S(x)$.
Therefore, Rule $(r_{a_3})$ is  $P^b(x) \gets S^b(x)$
and its first-type-rule   is $P(x) \gets m\_P^b(x) \wedge S(x)$.
Assume moreover, that $\mathcal{D}$ consists of $S(Beatrice)@8/9$ and $I(Arthur,Beatrice)@9/9$.
Hence,  Rule $(r_3)$ allows us to derive $P(Beatrice)@8/9$, and then Rule $(r_2)$ derives $P(Arthur)@[9/9,10/9]$. 
Although with the rule generated above from Rule $(r_{a_2})$, we can 
derive from $\mathcal{D}'$ the fact
$m\_P^{b}(Beatrice)@9/9$, we cannot derive  $m\_P^{b}(Beatrice)@8/9$.
This, in turn, disallows us to  derive $P(Arthur)@10/9$.
In order to overcome this problem, it turns out that it suffices to add a temporal operator in the head of the rule.
In our case, we  construct the following rule:
\begin{align*}
    \boxminus_{[0, 1]}m\_P^b(y) \gets \diamondplus_{[0,1]} m\_P^b(x)  \wedge I(x, y).
\end{align*}
Hence, the finally constructed $\Pi'$ 
consists of the  rules
\begin{align*}
    \boxplus_{[0, 2]} P(x)& \gets \diamondplus_{[0, 2]}\mathit{m\_P}^{b}(x) \wedge I(x, y) \wedge P(y),
\\
    \boxplus_{[0, 1]} P(x) &\gets \diamondplus_{[0, 1]} m\_P^b(x) \wedge I(x, y) \wedge \diamondminus_{[0, 1]}P(y),
\\
\mathit{m\_P}^{b}(y) &\gets \diamondplus_{[0, 2]} m\_P^{b}(x) \wedge  I(x, y),
\\
\boxminus_{[0, 1]}m\_P^b(y) & \gets \diamondplus_{[0,1]} m\_P^b(x)  \wedge I(x, y).
\end{align*}
In what follows we will provide details of this construction and 
show that 
$(\Pi',\mathcal{D}')$
and $(\Pi,\mathcal{D})$
have the same answers to the input query.

\subsubsection{Algorithm Overview.}
Our approach to constructing $(\Pi',\mathcal{D}')$ is provided in
Algorithm \ref{algo:magic}.
The algorithm takes as input a DatalogMTL program $\Pi$, dataset $\mathcal{D}$, and  query $q=Q(\bm{t})@\varrho$, and it returns the rewritten pair $(\Pi', \mathcal{D}')$. 
For simplicity of presentation we will  assume that rules in $\Pi$ do not have nested temporal operators and that their heads always mention either $\boxplus$ or $\boxminus$.
Note that these assumptions are without loss of generality, as we can always flatten nested operators by introducing additional rules, whereas an atom $M$ with no temporal operators can be written as $\boxplus_{0} M$ (or equivalently as $\boxminus_{0} M$)~\cite{DBLP:journals/jair/BrandtKRXZ18}.
Note, moreover, that our rewriting technique does not require the input query to be a fact, namely the sequence $\bm{t}$ in the query
$Q(\bm{t})@\varrho$ can contain variables. 

Line 1 of Algorithm~\ref{algo:magic} computes $\mathcal{D}'$ as an extension of $\mathcal{D}$ with the fact $m\_Q^{\gamma_0}(\bt{\bm{t}}{\gamma_0})@\varrho$. 
It is obtained from the query $Q(\bm{t})@\varrho$ using a sequence
$\gamma_0$ of $b$ and $f$ such that the $i$th element is $b$ if and only if the $i$th element of $\bm{t}$ is a constant.

Lines 2--3  compute the adorned program $\Pi_a$. 
In particular, Line 2 initialises the set $A$  of adorned \textit{idb} predicates that are still to be processed and the set ${\Pi_a}$ of  adorned rules.
Then, Line 3 constructs $\Pi_a$ in the same way as in the case of Datalog (see Preliminaries).

Line 4 initialises, as an empty set, $\Pi'$ and two auxiliary sets of rules $\Pi'_1$ and  $\Pi'_2$.
Then, Lines 5--9 
compute a set $\Pi_1'$ of rules of the first type and 
Lines 10--15 compute a set $\Pi_2'$ of rules of the second type (to be precise, $\Pi_1'$ and $\Pi_2'$ become rules of the first and the second type after removing adornments in Line 17).

In particular, the loop from Lines 5--9 constructs from each $r \in \Pi_a$ a rule $r'$ and adds it to $\Pi_1'$.
To construct $r'$,  Lines 7--8 check if the head of $r$  mentions $\boxplus$ or $\boxminus$. 
In the first case,  Line 7 adds an atom with $\diamondplus$ and a magic predicate to the rule body.
In the second case,  Line 8 adds an atom with $\diamondminus$ and a magic predicate to the rule body.
Then, Line 9 adds $r'$ to $\Pi_1'$.

Lines 10--15 construct rules $\Pi_2'$ of the second type from rules in $\Pi_1'$.
To this end, for each rule $r$ in $\Pi_1'$ (Line 10)
and each body atom $M$ in $r$ which mentions an \textit{idb} predicate (Line 11) the algorithm performs the following computations.
In Line 12 it computes a set $H$ of atoms with magic predicates which will be used to construct rule heads.
This is performed with a function \textbf{MagicHeadAtoms} implemented with 
Algorithm~\ref{algo:MagicHeadAtoms}, which we will describe later.
Then, the loop in Lines 13--15
computes a rule $r'$ for each atom $M'$ in $H$ and adds this $r'$ to $\Pi_2'$.
Rule $r'$ is constructed from $r$ in Line 14, by replacing the head with $M'$ and deleting from the rule body $M$ as well as all the other body atoms located to the right of $M$.

To construct the final program $\Pi'$ from $\Pi_1'$ and $\Pi_2'$, Line~16 constructs the union of these two sets and Line~17 deletes from these rules all the adornments of predicates which are non-magic, that is, not preceded by $m\_$.
Line~18 returns the pair consisting of $\Pi'$ and $\mathcal{D}'$.

\begin{algorithm}[t]
\SetAlgoNoLine
\SetAlgoNoEnd
\SetKwInOut{Input}{Input}
\SetKwInOut{Output}{Output}

\Input{A DatalogMTL program $\Pi$, dataset $\mathcal{D}$, and query $q = Q(\bm{t})@\varrho$}

\Output{A DatalogMTL program $\Pi'$ and  dataset $\mathcal{D}'$}

$\mathcal{D}' := \mathcal{D}\cup
\{ m\_Q^{\gamma_0}(\bt{\bm{t}}{\gamma_0})@\varrho \}$ ;

$A:= \{ Q^{\gamma_0} \}$ ; \quad  $\Pi_a := \emptyset$ \; 

$\Pi_a :=$ a program obtained from $\Pi$ by applying  Step 2 from Section ``Magic Set Rewriting'' until $A = \emptyset$ ;

$\Pi' := \emptyset$ ; \quad $\Pi'_1 := \emptyset$ ; \quad $\Pi'_2 := \emptyset$ ;

\ForEach{rule $r \in \Pi_a$}{
Let  $\Box_{\varrho'}R^\gamma(\bm{t})$, with $\Box \in \{\boxplus,\boxminus \}$, be the head of $r$ ;

    \lIf{$\Box=\boxplus$}{
$r':= r$ with $\diamondplus_{\varrho'}m\_R^\gamma(\bt{\bm{t}}{\gamma})$ added as the first body atom
    }
    \lIf{$\Box=\boxminus$}{
$r':= r$ with $\diamondminus_{\varrho'}m\_R^\gamma(\bt{\bm{t}}{\gamma)}$ added as the first body atom
    }

    $\Pi'_1 := \Pi'_1 \cup \{r'\}$ ;
}

\ForEach{rule $r \in \Pi'_1$}{
    \ForEach{$M \in body(r)$ with an \textit{idb} predicate}{
    $H := \textbf{MagicHeadAtoms}(M)$ ;

        \ForEach{atom $M' \in H$}{
            $r' := r$ with head replaced by $M'$ and body modified by deleting $M$ and all body atoms to the right of $M$ ;
            
            $\Pi_2' := \Pi_2' \cup \{r'\}$ ;
        }
    }
}
$\Pi' =\Pi'_1 \cup \Pi_2'$ ;

Delete adornments from  non-magic predicates in $\Pi'$ ;

\KwRet $(\Pi', \mathcal{D}')$\;
\caption{\textbf{MagicRewriting}}
\label{algo:magic}
\end{algorithm}

\begin{algorithm}[t]
\SetKwInOut{Input}{Input}
\SetKwInOut{Output}{Output}
\SetAlgoNoLine
\SetAlgoNoEnd

\Input{A metric atom $M$}
\Output{A set $H$ of metric atoms with magic predicates}

$H := \emptyset$;  \quad $M' := M$ ;

\ForEach{adorned \textit{idb} relational atom $R^\gamma(\bm{t})$ in $M$}{

Replace   $R^\gamma(\bm{t})$ in $M'$ with $m\_R^\gamma( \bt{\bm{t}}{\gamma})$ ;
}
\lIf{$M' = \boxplus_{\varrho} M_1$ or $M' = \boxminus_{\varrho} M_1$}{ $H := \{M'\}$ }

\lIf{$M' = \diamondplus_{\varrho} M_1$}{$H := \{\boxplus_{\varrho} M_1\}$ }
\lIf{$M' = \diamondminus_{\varrho} M_1$}{$H := \{\boxminus_{\varrho} M_1\}$ }

\If{$M' = M_2\ \mathcal{S}_{\varrho} M_1 $}{ 
$t_{max} := \textnormal{the right endpoint of }\varrho $ ;

\If{$M_2$ contains a magic predicate}{$H := H \cup \{\boxminus_{[0,t_{max}]} M_2\} $ ; }
\If{$M_1$ contains a magic predicate}{$H := H \cup \{\boxminus_{\varrho} M_1$\} ;}} 

\If{$M' = M_2\ \mathcal{U}_{\varrho} M_1$}{ 
$t_{max} := \textnormal{the right endpoint of }\varrho $ ;

\If{$M_2$ contains a magic predicate}{$H := H \cup \{\boxplus_{[0, t_{max}]} M_2\}$ ;}
\If{$M_1$ contains a magic predicate}{$H := H \cup \{\boxplus_{\varrho} M_1$\} ;}} 

\KwRet $H$ ;
\caption{\textbf{MagicHeadAtoms}}
\label{algo:MagicHeadAtoms}
\end{algorithm}

\subsubsection{MagicHeadAtoms.}
The function \textbf{MagicHeadAtoms} is used to construct rules of the second type (i.e. the set $\Pi_2'$), as presented in
Algorithm~\ref{algo:MagicHeadAtoms}. 
The algorithm takes as input a metric atom $M$ and returns a set $H$ of metric atoms, whose predicates are magic predicates.
Line 1 initialises $H$ as the empty set and introduces an auxiliary $M'$ which is set to $M$. 
The loop in Lines 2--3 modifies $M'$ by replacing \textit{idb} predicates with magic predicates. 
Then, Lines 4-18 construct the set $H$ depending on the form of $M'$.
Line 4 consider the cases when $M'$ mentions $\boxplus$ or $\boxminus$.
Line 5 considers the case when $M'$ mentions $\diamondplus$, and Line 6 when $M'$ mentions $\diamondminus$.
Lines~7-12 are for the case when $M'$ mentions $\mathcal{S}$ and Lines~13-18 
when $M'$ mentions $\mathcal{U}$.
The final set $H$ is returned in Line 19.

\subsubsection{Correctness of the Algorithm.}
Next we aim to show that, given a program $\Pi$, a dataset ${\mathcal{D}}$, and a query $q=Q(\bm{t})@\varrho$, the  pair $(\Pi', \mathcal{D}')$ returned by Algorithm~\ref{algo:magic} entails the same answers to $q$ as $(\Pi, \mathcal{D})$. This result is provided by the following theorem.
\begin{theorem}
\label{theorem1}
Let $\Pi$ be a
 DatalogMTL program,  $\mathcal{D}$ a dataset, $Q(\bm{t})@\varrho$ a query,
 and $(\Pi', \mathcal{D}')$  the output of Algorithm~\ref{algo:magic} when run on  $\Pi$, $\mathcal{D}$, and $Q(\bm{t})@\varrho$.
 For 
 each time point $t \in \varrho$ and each substitution $\sigma$ mapping variables in $\bm{t}$ to constants we have
\begin{align*}
    (\Pi, \mathcal{D})\models  Q(\sigma(\bm{t}))@t  && \text{iff} &&
    (\Pi',\mathcal{D}') \models Q(\sigma(\bm{t})) @t.
\end{align*}
\end{theorem}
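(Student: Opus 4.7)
The plan is to reduce the biconditional to a comparison of canonical models: since for each program-dataset pair the canonical model captures exactly the entailed facts, it suffices to show that, for every $t \in \varrho$ and every substitution $\sigma$, $Q(\sigma(\bm{t}))@t$ holds in $\mathfrak{C}_{\Pi,\mathcal{D}}$ iff it holds in $\mathfrak{C}_{\Pi',\mathcal{D}'}$. I would prove the two directions separately: the right-to-left direction (soundness of the rewriting) is essentially routine, while the left-to-right direction (completeness) is where the subtle work lies.

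For soundness, I would construct an auxiliary interpretation $\mathfrak{I}$ that coincides with $\mathfrak{C}_{\Pi,\mathcal{D}}$ on every non-magic ground relational atom and in which every magic atom $m\_R^\gamma(\bm{s})$ holds at every time point $t \in \mathbb{Q}$. The claim is that $\mathfrak{I}$ is a model of $(\Pi',\mathcal{D}')$. The dataset $\mathcal{D}'$ is satisfied because $\mathcal{D} \subseteq \mathfrak{I}$ and the added magic fact trivially holds. For every first-type rule in $\Pi'$ the extra body atom $\diamondplus_{\varrho'} m\_R^\gamma(\cdot)$ or $\diamondminus_{\varrho'} m\_R^\gamma(\cdot)$ is true everywhere in $\mathfrak{I}$, so the rule reduces to the corresponding rule of $\Pi$, which is satisfied by $\mathfrak{C}_{\Pi,\mathcal{D}}$ and hence by $\mathfrak{I}$. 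For every second-type rule the head is a box/diamond applied to a formula built from magic predicates, so it is universally true in $\mathfrak{I}$. Minimality of $\mathfrak{C}_{\Pi',\mathcal{D}'}$ then gives containment, and restricting to $Q$-atoms yields the soundness direction.

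Completeness is the main obstacle and I would prove it by transfinite induction on the approximating sequence $\mathfrak{I}_0, \mathfrak{I}_1, \dots$ whose union at $\omega_1$ equals $\mathfrak{C}_{\Pi,\mathcal{D}}$. The key invariant is: for every adorned intensional predicate $R^\gamma$ occurring in $\Pi_a$, every constant tuple $\bm{c}$ and every time point $t^\star$, if $R(\bm{c})@t^\star \in \mathfrak{I}_\alpha$ and $m\_R^\gamma(bv(\bm{c},\gamma))@t^\star \in \mathfrak{C}_{\Pi',\mathcal{D}'}$, then $R(\bm{c})@t^\star \in \mathfrak{C}_{\Pi',\mathcal{D}'}$. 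The base and limit cases are immediate. For the successor step, suppose $R(\bm{c})@t^\star$ first appears in $\mathfrak{I}_{\alpha+1}$ through some ground instance $r$ of a rule of $\Pi$ with head $\Box_{\varrho'}R(\bm{t})$ firing at time $t_0$, where $t^\star \in t_0 + \varrho'$ if $\Box = \boxplus$ and $t^\star \in t_0 - \varrho'$ if $\Box = \boxminus$. Then the modal magic body atom $\diamondplus_{\varrho'} m\_R^\gamma(\cdot)$ (resp.\ $\diamondminus_{\varrho'} m\_R^\gamma(\cdot)$) holds at $t_0$ in $\mathfrak{C}_{\Pi',\mathcal{D}'}$ by the choice of $t_0$ and $t^\star$. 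Using the second-type rules produced by \textbf{MagicHeadAtoms}, the magic atoms for every intensional body subatom of $r$ become available in $\mathfrak{C}_{\Pi',\mathcal{D}'}$ at all time points the induction hypothesis requires; each intensional body atom used by $r$ is then inherited into $\mathfrak{C}_{\Pi',\mathcal{D}'}$, and the first-type rule corresponding to $r$ fires in $(\Pi',\mathcal{D}')$ to derive $R(\bm{c})@t^\star$.

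The hardest part will be verifying, case by case, that the heads generated by \textbf{MagicHeadAtoms} propagate magic facts to exactly the time points demanded by inner atoms of $\boxplus$, $\boxminus$, $\diamondplus$, $\diamondminus$, $\mathcal{S}$, and $\mathcal{U}$ subformulas. For $\mathcal{U}$ and $\mathcal{S}$ this requires showing that the box wrapper over the interval $[0, t_{max}]$ covers every time point at which the ``holds throughout'' component must be satisfied, while the second generated head with interval $\varrho$ captures the witness time point of the other component. This is a careful but finite case analysis driven by the semantics in Table~\ref{tab:semantable}. Once this is done, instantiating the invariant at $R^{\gamma_0} = Q^{\gamma_0}$, $\bm{c} = \sigma(\bm{t})$, and $t^\star = t$, together with the dataset fact $m\_Q^{\gamma_0}(bt(\bm{t},\gamma_0))@\varrho$ in $\mathcal{D}'$, delivers the completeness direction and, combined with soundness, proves the theorem.
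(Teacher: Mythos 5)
Your proposal is correct in outline, and its completeness half is essentially the paper's own argument: your invariant --- if $R(\bm{c})@t^\star$ holds at stage $\alpha$ of the approximating sequence for $(\Pi,\mathcal{D})$ and $m\_R^\gamma(bt(\bm{c},\gamma))@t^\star$ holds in $\mathfrak{C}_{\Pi',\mathcal{D}'}$, then $R(\bm{c})@t^\star$ holds in $\mathfrak{C}_{\Pi',\mathcal{D}'}$ --- is precisely the paper's Claim~\ref{essenceclaim}, proved there by the same induction on derivation stages together with the same operator-by-operator analysis of \textbf{MagicHeadAtoms} (note that you wrote $bv(\bm{c},\gamma)$ where $bt(\bm{c},\gamma)$ is meant, since $\bm{c}$ is ground, and the final instantiation tacitly uses $bt(\sigma(\bm{t}),\gamma_0)=bt(\bm{t},\gamma_0)$). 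One phrase in your sketch, ``the magic atoms for every intensional body subatom become available,'' hides a genuine inner induction along the reasoning order of the body: the second-type rule generated for the $j$th body atom retains $M_1,\dots,M_{j-1}$ in its body, so deriving the magic fact for atom $j$ must be interleaved with having already transferred the non-magic facts for atoms $1,\dots,j-1$ into $\mathfrak{C}_{\Pi',\mathcal{D}'}$; the paper carries this out explicitly as a nested induction on the index $j$, and a complete write-up of your proof would need the same. Where you genuinely diverge is the soundness direction: the paper proves it by induction on the birth round of a fact with respect to $(\Pi',\mathcal{D}')$, using that deleting the magic body atom of a first-type rule yields a rule of $\Pi$, whereas you construct the interpretation agreeing with $\mathfrak{C}_{\Pi,\mathcal{D}}$ on non-magic atoms and making every magic atom true at every time point, verify it is a model of $(\Pi',\mathcal{D}')$ (first-type rules reduce to rules of $\Pi$; second-type rules have heads that are, possibly trivial, boxes over magic atoms and hence hold everywhere), and conclude directly from the definition of entailment. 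This model-theoretic route is valid and shorter, avoiding the induction altogether, while the paper's version is more uniform with the converse direction. Finally, your invariant is stated only for intensional predicates, so the degenerate case where $Q$ is not \textit{idb} in $\Pi$ (then $\Pi'$ is empty and both sides reduce to comparing $\mathcal{D}$ with $\mathcal{D}'$) should be dispatched separately, as the paper does in its opening sentence.
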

\begin{psketch}
    For the if part, we leverage the fact that the first kind of rules in $\Pi'$ (i.e. rules in $\Pi_1'$)  are constructed by adding body atoms with magic predicates to the bodies of rules in $\Pi$.
    Hence, we can show that  each  fact entailed by $(\Pi', \mathcal{D}')$ is also entailed by $(\Pi, \mathcal{D})$.

   The opposite direction is more challenging. 
   The main part of the proof is to show that if $(\Pi,\mathcal{D}) \models R(\bm{t})@t$ and $(\Pi',\mathcal{D}') \models m\_R^\gamma(\bt{\bm{t}}{\gamma})@t$ for some $R(\bm{t})@t$ and $\gamma$,
    then $(\Pi',\mathcal{D}') \models R(\bm{t})@t$.
    This is sufficient to finish the proof.
    Indeed,
    note that by the construction of $\mathcal{D}'$, it contains $m\_Q^{\gamma_0}(\bt{\bm{t}}{\gamma_0})@\varrho$. 
    Therefore, by the implication above we have $(\Pi',\mathcal{D}') \models m\_Q^{\gamma_0}(\bt{\bm{t}}{\gamma_0})@t$, as required.
    To show the needed implication we conduct a  proof by a transfinite induction on the number of $T_\Pi$ applications to $\mathfrak{I}_\mathcal{D}$ and $T_{\Pi'}$ applications to $\mathfrak{I}_{\mathcal{D}'}$.
\end{psketch}

\subsubsection{Reasoning with the Algorithm.}
Theorem~\ref{theorem1} shows us  that
to answer a query $q$ with respect to $\Pi$ and $\mathcal{D}$, we can instead answer $q$ with respect to $\Pi'$ and $\mathcal{D}'$ constructed by  Algorithm~\ref{algo:magic}.
In what follows we will show how to efficiently answer
$q$ with respect to $\Pi'$ and $\mathcal{D}'$.
For this, we will use a recently introduced approach by~\citet{DBLP:conf/aaai/WalegaZWG23}, which  constructs a finite representation of (usually infinite) canonical model of a program and a dataset. 
Importantly, this approach is guaranteed to terminate, which stands in contrast with a pure materialisation approach in DatalogMTL that  often requires a transfinite number of rule applications.

Although the approach of~\citet{DBLP:conf/aaai/WalegaZWG23}, presented in their Algorithm 1,
overperforms other reasoning techniques for DatalogMTL, it is worth observing that it is guaranteed to terminate only if the input program and dataset are both bounded (i.e. mention only bounded intervals).
Moreover, the runtime of their algorithm heavily depends on the distance between the minimal and maximal time points in the input dataset.
This makes applying their algorithm  in our setting challenging, because time points of our
$\mathcal{D}'$ depend on the input query.
Indeed,
$\mathcal{D}'$ constructed in Line 1 of our  Algorithm~\ref{algo:magic} 
contains the fact $m\_Q^{\gamma_0}(\bt{\bm{t}}{\gamma_0})@\varrho$, whose interval $\varrho$ is copied from the input query $q = Q(\bm{t})@\varrho$.
Hence, if $\varrho$ is large, then applying the approach of~\citet{DBLP:conf/aaai/WalegaZWG23} may be inefficient.
Moreover, if $\varrho$ is unbounded, then so is $\mathcal{D}'$, and  we loose the guarantee of termination.

We will address these challenges below.
First, we observe that given a program $\Pi$, dataset $\mathcal{D}$, and query $q=Q(\bm{t})@\varrho$, 
instead of computing 
$\textbf{MagicRewriting}(\Pi,\mathcal{D},Q(\bm{t})@\varrho)$
we can compute 
$\textbf{MagicRewriting}(\Pi,\mathcal{D},Q(\bm{t})@(-\infty,+\infty))$
and the new program-dataset pair will still entail the same answers to $q$ as the the input $\Pi$ and $\mathcal{D}$.
This is a simple consequence of Theorem~\ref{theorem1} as stated formally below.

\begin{corollary}
\label{coro1}
Let $\Pi$ be a
 DatalogMTL program,  $\mathcal{D}$ a dataset, $Q(\bm{t})@\varrho$ a query,
 and $(\Pi', \mathcal{D}')$  the output of Algorithm~\ref{algo:magic} when run on  $\Pi$, $\mathcal{D}$, and $Q(\bm{t})@(-\infty,+\infty)$.
 For 
 each time point $t \in \varrho$ and each substitution $\sigma$ mapping variables in $\bm{t}$ to constants we have
\begin{align*}
    (\Pi, \mathcal{D})\models  Q(\sigma(\bm{t}))@t  && \text{iff} &&
    (\Pi',\mathcal{D}') \models Q(\sigma(\bm{t})) @t.
\end{align*}
\end{corollary}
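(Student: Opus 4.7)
The plan is to obtain Corollary~\ref{coro1} as a direct specialisation of Theorem~\ref{theorem1}. The key observation is that Theorem~\ref{theorem1} is stated for an arbitrary query interval, so nothing prevents us from instantiating it with the unbounded interval $(-\infty,+\infty)$ in place of the original $\varrho$.

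First I would check that $Q(\bm{t})@(-\infty,+\infty)$ is a well-formed input to Algorithm~\ref{algo:magic}. The paper's definition of intervals permits endpoints in $\mathbb{Q} \cup \{-\infty,+\infty\}$, and Algorithm~\ref{algo:magic} uses the query interval only in Line~1, where it inserts the magic fact $m\_Q^{\gamma_0}(bt(\bm{t},\gamma_0))@(-\infty,+\infty)$ into $\mathcal{D}'$; every subsequent line manipulates $\Pi$ alone and is therefore oblivious to which interval was chosen.

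I would then apply Theorem~\ref{theorem1} to the query $Q(\bm{t})@(-\infty,+\infty)$ together with $\Pi$ and $\mathcal{D}$. This yields the equivalence
\begin{align*}
(\Pi,\mathcal{D}) \models Q(\sigma(\bm{t}))@t \quad \text{iff} \quad (\Pi',\mathcal{D}') \models Q(\sigma(\bm{t}))@t
\end{align*}
for every $t \in (-\infty,+\infty)$ and every substitution $\sigma$ of the variables of $\bm{t}$ by constants. Since $\varrho \subseteq (-\infty,+\infty)$, the equivalence holds a fortiori for every $t \in \varrho$, which is precisely the conclusion of the corollary.

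There is no substantive obstacle: the entire content is inherited from Theorem~\ref{theorem1}, and the only point worth flagging is that widening the query interval to $(-\infty,+\infty)$ can only enlarge $\mathcal{D}'$ (by stretching the temporal extent of the single magic fact inserted in Line~1) and hence can only enlarge the set of facts entailed by $(\Pi',\mathcal{D}')$; Theorem~\ref{theorem1} ensures that this enlargement introduces no spurious answers to $q$, because the equivalence it provides is pointwise in $t$.
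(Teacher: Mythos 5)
Your proposal is correct and matches the paper's argument: the paper also obtains Corollary~\ref{coro1} as a direct instantiation of Theorem~\ref{theorem1} with the query interval replaced by $(-\infty,+\infty)$, then restricts attention to the time points $t \in \varrho$. Your additional check that Algorithm~\ref{algo:magic} uses the query interval only in Line~1 is a harmless (and sensible) sanity check, not a deviation.
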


We note that $\textbf{MagicRewriting}(\Pi,\mathcal{D},Q(\bm{t})@(-\infty,+\infty))$ constructs a pair $(\Pi',\mathcal{D}')$
such that $\mathcal{D}'$ contains $m\_R(\bm{t})@(-\infty,+\infty)$.
Hence,  $\mathcal{D}'$ is not bounded,
and so, application of Algorithm 1 by~\citet{DBLP:conf/aaai/WalegaZWG23} to $\Pi'$, $\mathcal{D}'$, and $q$ is not guaranteed to terminate.
What we show next, however, is that unboundedness of $m\_R(\bm{t})@(-\infty,+\infty)$ does not lead to non-termination.
Indeed,  if $\Pi$ and $\mathcal{D}$ are bounded, then the algorithm of~\citet{DBLP:conf/aaai/WalegaZWG23} is guaranteed to terminate  on $\Pi'$, $\mathcal{D}'$, and $q$, as stated next.

\begin{theorem}
\label{migrationProp}
Let $\Pi$ be a bounded program, $\mathcal{D}$ a bounded dataset, $q=Q(\bm{t})@\varrho$ a fact, and
$(\Pi', \mathcal{D}')$ the output of Algorithm~\ref{algo:magic} when run on  $\Pi$, $\mathcal{D}$, and $Q(\bm{t})@(-\infty, +\infty)$.
Application of Algorithm 1 by~\citet{DBLP:conf/aaai/WalegaZWG23} is guaranteed to terminate on $\Pi'$, $\mathcal{D}'$, and $q$, even though $\mathcal{D}'$ is not bounded.
\end{theorem}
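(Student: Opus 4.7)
\begin{psketch}
The plan is to show that the single unbounded fact in $\mathcal{D}'$, namely $F = m\_Q^{\gamma_0}(bt(\bm{t},\gamma_0))@(-\infty,+\infty)$, is structurally so simple that it does not disrupt the termination argument of~\citet{DBLP:conf/aaai/WalegaZWG23}. First, I confirm that $\Pi'$ is bounded: inspecting Algorithms~\ref{algo:magic} and~\ref{algo:MagicHeadAtoms}, every interval appearing in a rule of $\Pi'$ is either an interval of $\Pi$ or an interval of the form $[0,t_{max}]$ for an endpoint $t_{max}$ of such an interval, and since $\Pi$ is bounded so is $\Pi'$. Moreover, $\mathcal{D}' \setminus \{F\} = \mathcal{D}$ is bounded, so $F$ is the sole unbounded fact in play.

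The key observation is that $F$ interacts with $\Pi'$ in a particularly benign way. On the one hand, $m\_Q^{\gamma_0}$ appears in a rule body of $\Pi'$ only as the leading atom, wrapped by $\diamondplus_{\varrho'}$ or $\diamondminus_{\varrho'}$; once $F$ belongs to the interpretation, each such atom holds at every time point and therefore contributes the constant ``true'' to its rule. On the other hand, every fact over $m\_Q^{\gamma_0}(bt)$ that a rule of $\Pi'_2$ could derive is already subsumed by $F$, so the $m\_Q^{\gamma_0}$-component of the interpretation never grows beyond $F$. A straightforward induction on the stage of iteration of $T_{\Pi'}$ starting from $\mathfrak{I}_{\mathcal{D}'}$ then shows that every fact newly derived at any stage, other than $F$ itself, admits a bounded maximal-interval representation whose endpoints depend only on endpoints of intervals in $\Pi$ and on time points in $\mathcal{D}$: each body atom of an applicable ground rule instance is either trivially satisfied via $F$ or is a bounded atom (edb from $\mathcal{D}$, or idb bounded by the inductive hypothesis), and each rule head is of the form $\boxplus_\varrho$ or $\boxminus_\varrho$ with bounded $\varrho$, so only a bounded amount is added to the derived interval.

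Hence, except for the fixed, maximally-represented $F$, the execution on $(\Pi',\mathcal{D}')$ proceeds over a bounded set of time-endpoints, exactly the regime in which Walega et al.'s periodicity-based termination proof applies; the algorithm therefore halts after finitely many iterations. The main technical obstacle will be to verify rigorously that the concrete interval-bookkeeping and period-detection data structures of their Algorithm 1 are not disturbed by the presence of $F$. This reduces to the observation that $F$ is stored in its final, maximal form from the very start, so no subsequent rule application can either extend it or alter the periodicity verdict for any other predicate; consequently the termination test behaves identically to its behaviour on a bounded companion input obtained by deleting $F$ and the trivially-satisfied leading magic atoms over $m\_Q^{\gamma_0}$ from the rule bodies.
\end{psketch}
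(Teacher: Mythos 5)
There is a genuine gap, and it sits exactly where you yourself flag ``the main technical obstacle.'' First, your inductive claim that every fact derived at any stage other than $F=m\_Q^{\gamma_0}(bt(\bm{t},\gamma_0))@(-\infty,+\infty)$ admits a bounded representation is false. Whenever the first original body atom of a rule is \textit{idb}, Lines 10--15 of Algorithm~\ref{algo:magic} produce a rule in $\Pi_2'$ whose body consists of the leading magic atom alone; once $F$ holds, the relevant ground instance of that body is satisfied at \emph{every} time point, so already the first application of $T_{\Pi'}$ can derive further magic facts over the whole timeline $(-\infty,+\infty)$. Hence the run is not confined to ``a bounded set of time-endpoints,'' and the stronger claim that endpoints depend only on $\Pi$ and $\mathcal{D}$ fails anyway, since temporal recursion lets endpoints grow with the iteration stage. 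Second, your reduction to a ``bounded companion input obtained by deleting $F$ and the trivially-satisfied leading magic atoms over $m\_Q^{\gamma_0}$ from the rule bodies'' is unsound: those body atoms are of the form $\diamondplus_{\varrho'}m\_Q^{\gamma_0}(bv(\bm{t},\gamma_0))$ with \emph{variable} arguments, and $F$ only makes them true for the particular constant tuple $bt(\bm{t},\gamma_0)$ (other $m\_Q^{\gamma_0}$ facts are derived by $\Pi_2'$ for other constants). Deleting them removes precisely the relevance filtering that magic sets introduce, so the companion derives a different (larger) set of facts and the assertion that the termination test ``behaves identically'' is unjustified; your argument therefore never actually reduces the unbounded input to a case covered by the termination theorem of \citet{DBLP:conf/aaai/WalegaZWG23}.

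The paper closes this gap with a different, and sound, companion: keep $\Pi'$ and all magic atoms intact, drop $F$ from the dataset, and add instead the single ground rule $m\_Q^{\gamma_0}(bt(\bm{t},\gamma_0))\gets\top$, obtaining a \emph{bounded} pair $(\Pi'',\mathcal{D})$. One then shows (Theorem~\ref{adapttheo}) that the $T_{\Pi''}$-iteration from $\mathfrak{I}_{\mathcal{D}}$ coincides, shifted by exactly one step, with the $T_{\Pi'}$-iteration from $\mathfrak{I}_{\mathcal{D}'}$. Boundedness of $(\Pi'',\mathcal{D})$ guarantees a saturated interpretation at some finite stage $k+1$, hence $\mathfrak{I}'_k$ is saturated and Algorithm 1 of \citet{DBLP:conf/aaai/WalegaZWG23} terminates on $(\Pi',\mathcal{D}',q)$ after $k$ steps. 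Note that this route needs no claim that derived facts stay bounded---which, as observed above, they do not---so if you want to salvage your proof, replace both the boundedness induction and the atom-deletion companion by this rule-simulation argument.
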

\begin{psketch}
Observe that the program $\Pi'$ constructed by our Algorithm~\ref{algo:magic} is bounded and the only  unbounded fact in $\mathcal{D}'$ is  $m\_Q^{\gamma_0}(\bt{\bm{t}}{\gamma_0})@(-\infty, +\infty)$.
To show that application of Algorithm 1 by~\citet{DBLP:conf/aaai/WalegaZWG23} terminates on $\Pi'$, $\mathcal{D}'$, and $q$, we  replace the unbounded fact in $\mathcal{D}'$ with an additional rule in $\Pi'$, namely we construct $\Pi''$ by extending $\Pi'$ with a single rule $m\_Q^{\gamma_0}(\bt{\bm{t}}{\gamma_0}) \gets \top$, which simulates $m\_Q^{\gamma_0}(\bt{\bm{t}}{\gamma_0})@(-\infty, +\infty)$.
We can show that the number of iterations of their algorithm running on $(\Pi'', \mathcal{D}, q)$ is equal to or greater than the number of its iterations on
$(\Pi', \mathcal{D}', q)$ by one.
Since $\Pi''$ and $\mathcal{D}$ are  bounded, both of the  above mentioned numbers of iterations need to be finite.
Hence, the algorithm is guaranteed to terminate on $\Pi'$, $\mathcal{D}'$, and $q$, as stated in the theorem.
\end{psketch}

Hence, we obtain a terminating approach for reasoning in DatalogMTL with magic set rewriting.
To sum up, our approach to checking entailment of a fact $q = Q(\bm{t})@\varrho$ with respect to a program $\Pi$ and a dataset $\mathcal{D}$ consists of two steps.
The first step is to  construct $\Pi'$ and $\mathcal{D}'$ by applying our Algorithm~\ref{algo:magic} to $\Pi$, $\mathcal{D}$, and $Q(\bm{t})@(-\infty,+\infty)$.
The second step is to apply Algorithm 1 of~\citet{DBLP:conf/aaai/WalegaZWG23} to $\Pi'$, $\mathcal{D}'$, and $q$.
This method is guaranteed to terminate and correctly answer whether $\Pi$ and $\mathcal{D}$ entail $q$.
In the next section we will study performance of this new approach.

\section{Empirical Evaluation}
We implemented our algorithm and evaluated it on LUBM$_t$ ~\cite{DBLP:conf/aaai/WangHWG22}---a temporal version of LUBM~\cite{DBLP:journals/ws/GuoPH05LUBM}---, on  iTemporal~\cite{DBLP:conf/icde/BellomariniNS22}, and on the meteorological~\cite{Weather} benchmarks.
The aim of our experiments is to check how the performance of reasoning with magic programs compare with  reasoning using original programs. 
The three  programs we used have 85, 11, and 4 rules, respectively. 
Moreover, all our test queries are facts (i.e. they do not have free variables) since current DatalogMTL system do not allow for answering queries with variables (they are designed to check entailment). 
Given a program $\Pi$ and a dataset $\mathcal{D}$ we used  MeTeoR system~\cite{DBLP:conf/aaai/WalegaZWG23} to perform reasoning. 
The engine  first loads the pair and preprocesses the dataset before it starts  reasoning. Then, for each given query fact, we record the wall-clock time that the baseline approach~\cite{DBLP:conf/aaai/WalegaZWG23} takes to answer the query.
For our approach, we record the time taken for running Algorithm~\ref{algo:magic} to generate the transformed pair $(\Pi', \mathcal{D}')$ plus the time of answering the query via the materialisation of $(\Pi', \mathcal{D}')$. 
We ran the experiments on a server with 256 GB RAM, Intel Xeon Silver 4210R CPU @ 2.40GHz, Fedora Linux 40, kernel version 6.8.5-301.fc40.x86\_64. In each test case, we verified that both our approach and the baseline approach produced the same answer to the same query. Moreover, we observed that the time taken by running Algorithm~\ref{algo:magic} was in all test cases less than 0.01 second, so we do not report these numbers separately. 
Note that our results are not directly comparable to those presented by~\citet{DBLP:conf/aaai/WalegaZWG23} and~\citet{DBLP:conf/aaai/WangHWG22}, since the datasets and queries are generated afresh, and different hardware is used. The datasets we used, the source code of our implementation, as well as an extended technical report, are available online.\footnote{https://github.com/RoyalRaisins/Magic-Sets-for-DatalogMTL}
\subsubsection{Comparison With Baseline.}
We compared the performance of our approach to that of the reasoning with original programs on datasets from LUBM$_t$ with $10^6$ facts, iTemporal with $10^5$ facts, and ten years' worth of meteorological data. 
We ran 119, 35, and 28 queries on them respectively. 
These queries cover every \textit{idb} predicate and are all facts.
The results are summarised in the box plot in Figure \ref{complubm}. 
The red and orange boxes represent time statistics for magic programs and the original program, respectively, while the blue box represents the per-query time ratios. 
Since computing finite representations using the baseline programs requires too much time for LUBM$_t$ and iTemporal (more than 100,000s and 50,000s) and memory and time consumption should theoretically stay the same across each computation, we used an average time for computing finite representations as the query time for each not entailed query instead of actually running them. 

For the LUBM$_t$ case, we achieved a performance boost on every query, the least enhanced (receiving the least speedup among other queries compared with the original program) of these received a 1.95 times speedup. 
For the entailed queries, half were accelerated by more than 3.46 times, 25\% were answered more than 112 times faster. 
The most enhanced query was answered 2,110 times faster. 
For the not entailed queries, every query was accelerated by more than 12 times. 
Three quarters of the queries were answered more than 128 times faster using magic programs than the original program, 25\% were answered more than 9,912 times faster. 
The most enhanced query was answered 111,157 times faster. 
It is worth noting that for the queries where a performance uplift are most needed, which is when the original program's finite representation needs to be computed, 
especially when the query fact is not entailed, our rewritten pairs have consistently achieved an acceleration of more than 12 times, making the longest running time of all tested queries 12 times shorter. 
This can mean that some program-dataset pairs previously considered beyond the capacity of the engine may now be queried with acceptable time costs.

For the meteorological dataset case, the results resemble those of LUBM$_t$, with the least accelerated query being answered 1.58 times faster. 
Half of the queries were answered more than 3.54 times. 
One particular query was answered more than 1 million times faster.

For the iTemporal case, we achieved a more than 69,000 times performance boost on every query, some over a million times. 
As can be observed from the chart at the bottom in Figure~\ref{complubm}, the general performance boost is considerable and rather consistent across queries. 
The huge performance boost is because in this case, an \textit{idb} fact has very few relevant facts and derivations of the vast majority of \textit{idb} facts were avoided by magic programs.

\subsubsection{Scalability Experiments.} 
We conducted scalability experiments on both LUBM$_t$ and iTemporal benchmarks as they are equipped with generators allowing for constructing datasets of varying sizes. 
For LUBM$_t$, we generated datasets of 5 different sizes, $10^2- 10^6$, and used the program provided along with the benchmark; for each dataset size, we selected 30 queries. 
For iTemporal, the datasets are of sizes $10-10^5$ and the program was automatically generated; for each dataset size, we selected 270 queries. 
To rule out the impact of early termination and isolate the effect that magic sets rewriting has on the materialisation time, we only selected queries that cannot be entailed so that the finite representation is always fully computed. 
These queries were executed using both approaches, with the average execution time computed and displayed as the red ($\Pi$ for original program) and blue ($\Pi'$ for magic program) lines in Figure~\ref{scale}. 

As the chart shows, our magic programs scale better for LUBM$_t$ than the original program in terms of canonical model computation. 
For iTemporal, the first two datasets are too small that both approaches compute the results instantaneously. 
As the size of a dataset increases from $10^3$ to $10^5$, we observe a similar trend as LUBM$_t$.
\begin{figure}[t]
\begin{subfigure}{\columnwidth}
    \includegraphics[width=1\columnwidth,height=0.2\textheight]{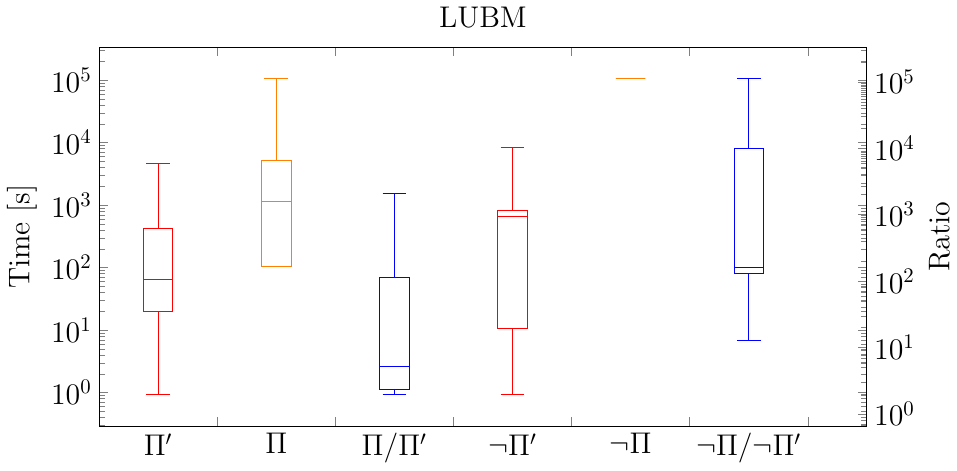}
\end{subfigure}
\begin{subfigure}{\columnwidth}
    \includegraphics[width=1\columnwidth,height=0.2\textheight]{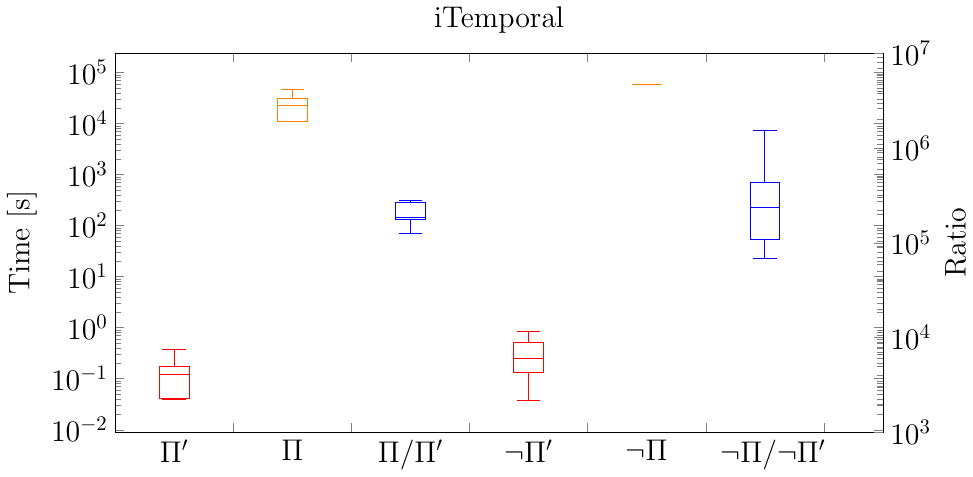}
\end{subfigure}
\begin{subfigure}{\columnwidth}
    \includegraphics[width=1\columnwidth,height=0.2\textheight]{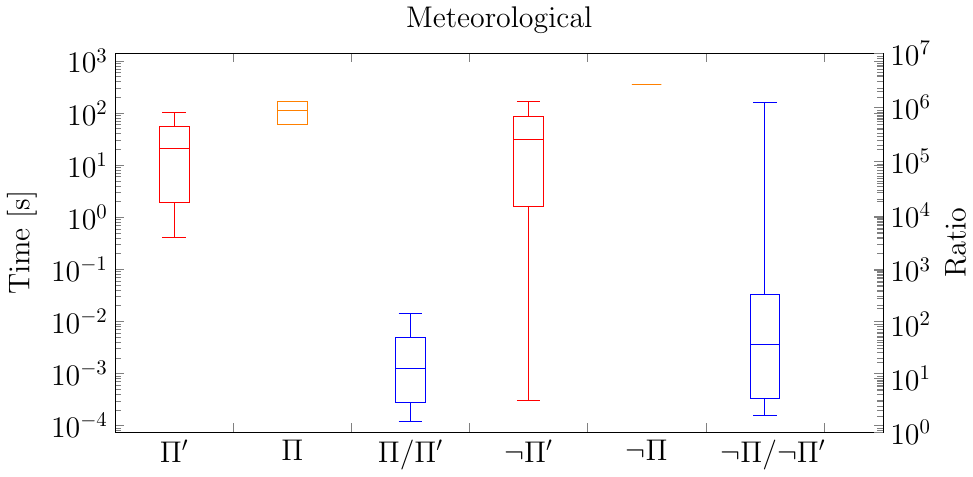}
\end{subfigure}
\caption{Comparison with baseline; ($\neg$)$\Pi'$, ($\neg$)$\Pi$, $(\neg)\Pi/(\neg)\Pi'$ refer to time consumed by (not) entailed queries on magic programs, the baseline program, and to per-query time ratios. Scales for ratios are on the right}
\label{complubm}
\end{figure}

\begin{figure}[t]
\begin{subfigure}{0.49\columnwidth}
\centering
    \includegraphics[width=\columnwidth]{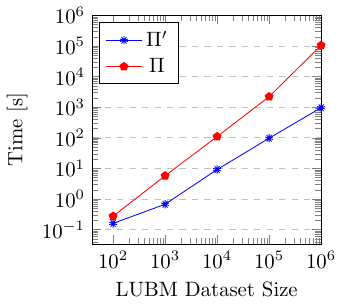}
\end{subfigure}
\begin{subfigure}{0.49\columnwidth}
\centering
    \includegraphics[width=1\columnwidth]{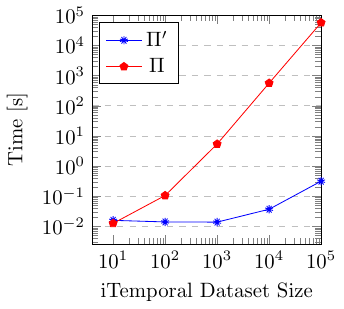}
\end{subfigure}
\caption{Scalability results, where  $\Pi'$ stands for a magic program and $\Pi$  for an original program}
\label{scale}
\end{figure}
\section{Conclusion and Future Work}
We have extended the magic set rewriting technique to the temporal setting,
which allowed us to improve  performance of query answering in DatalogMTL. 
We have obtained a goal-driven approach,  providing an alternative to the state-of-the-art materialisation-based method in DatalogMTL. 
Our new approach can be particularly useful in applications where the input dataset changes frequently or the materialisation of the entire program and dataset is not feasible due to too high time and space consumption. 
In the future, we plan to consider developing a hybrid approach that combines magic sets technique with materialisation. 
Another interesting avenue is to extend the magic sets approach to languages including such  features such as negation and aggregation.

\section{Acknowledgements}
This work was funded by NSFC grants 62206169 and 61972243.
It is also partially supported by
the EPSRC projects OASIS (EP/S032347/1), ConCuR
(EP/V050869/1) and UK FIRES (EP/S019111/1), as well as SIRIUS Centre for Scalable Data Access and Samsung Research UK.
For the purpose of Open Access, the authors have applied a CC BY public copyright licence to any 
Author Accepted Manuscript (AAM) version arising from this submission.
We thank Prof. Boris Motik for proofreading the manuscript and providing insightful comments.

\bibliography{CameraReady/LaTeX/aaai25}

\clearpage


\appendix
\section{Proofs}
Before presenting the proofs we make a few relevant definitions in addition to the ones in our paper. 

In the following text we adopt interval arithmetic. For a set $T\subseteq \mathbb{Q}\bigcup \{-\infty,\infty\}$, let $inf(T)$ represent the infimum of $T$ and $sup(T)$ the supremum of $T$. We define a function $Link: [\mathbb{R}]^2 \xrightarrow{}[\mathbb{R}]$, for two intervals $\varrho_1$ and $\varrho_2$, let $B = \varrho_1\cup\varrho_2$,
\begin{align*}
    Link(\varrho_1,\varrho_2) = &[inf(B),sup(B)] \\
    &- (B\ \cap\ \{inf(B),sup(B)\}) 
\end{align*}  
where $[inf(B),sup(B)]$ is a closed interval.

A DatalogMTL fact with a punctual interval is a punctual fact.

For a tuple of terms $\bm{t}$ and an adornment $\gamma$ of the same length, let $\bt{\bm{t}}{\gamma}$ and $\bt{\bm{t}}{\gamma}$ be the subsequences of bound terms and bound variables 
in $\bm{t}$, respectively, according to $\gamma$.

For a non-nested rule $r$ that is without nested operators, we define the \textbf{reasoning order} of the predicates in $body(r)$.

\begin{itemize}
    \item We first number the metric atoms of $r$ from left to right, starting with 1.
    \item We next generate a listing of all the predicates $p$ that appeared in $r$ as follows, note that duplicate predicates are preserved. 
    \begin{itemize}
        \item For any pair of predicates $p_1$ and $p_2$ , if $p_1$ and $p_2$ are from different metric atoms, say $M_i$ and $M_j$, and $i < j$, then in the listing $p_1$ appears on the left of $p_2$. Predicate $p_1$ appears on the right of $p_2$ when $i > j$.
        \item If $p_1$ and $p_2$ are from the same metric atom $M_i$, then $M_i$ contains MTL operator $\mathcal{S}$ or $\mathcal{U}$. Assume that the metric atom is in the form $p_1\mathcal{O}p_2$, where $\mathcal{O}$ is either $\mathcal{S}$ or $\mathcal{U}$, then $p_2$ appears on the left of $p_1$. The reverse is true for $p_1$ and $p_2$ if the metric atom is in the form $p_2\mathcal{O}p_1$.
    \end{itemize}
    \item Let the listing generated by the previous steps be $L$, we number the predicates in $L$ from left to right staring with 1, the number becoming the predicate's index in the reasoning order.
\end{itemize}

Let $\mathfrak{I}_0,\mathfrak{I}_1,...,$ and $\mathfrak{I}_0',\mathfrak{I}_1',...,$ be the transfinite sequences of interpretations defined for $\mathfrak{I}_{\mathcal{D}}$ and $\mathfrak{I}_{\mathcal{D}'}$ by the immediate consequence operators $T_\Pi$ and $T_{\Pi'}$, respectively. If for a fact $g$, there is an integer $i$ such that $\mathfrak{I}_i$ satisfies $g$ and $\mathfrak{I}_{i-1}$ does not satisfy $g$, we call this integer the birth round of $g$ regarding $(\Pi,\mathcal{D})$, and denote this integer by $\mathsf{birth}(g,\Pi,\mathcal{D})$. Facts satisfied by $\mathfrak{I}_0$ have birth round 0 regarding $(\Pi,\mathcal{D})$.

For an interpretation $\mathfrak{I}$, an interval $\varrho$, a relational atom $R'(\bm{t}')$, let the projection $\mathfrak{I} \mid_{R'(\bm{t}')}$ of $\mathfrak{I}$ over $R'(\bm{t}')$ be the interpretation that coincides with $\mathfrak{I}$ on relational atom $R'(\bm{t}')$ on $(-\infty,+\infty)$ and makes all other relational atoms false on $(-\infty,+\infty)$, and the projection $\mathfrak{I} \mid_{\varrho}$ of $\mathfrak{I}$ over $\varrho$ be the interpretation that coincides with $\mathfrak{I}$ on $\varrho$ and makes all relational atoms false outside $\varrho$. Let the combination of projections be so that $\mathfrak{I} \mid_{R'(\bm{t}'),\varrho} = (\mathfrak{I} \mid_{R'(\bm{t}')}) \mid_{\varrho}$. It is clear from this definition that interpretation $\mathfrak{I}$ is a shift of another interpretation $\mathfrak{I}'$ if and only if for any relational atom $R'(\bm{t}')$, $\mathfrak{I} \mid_{R'(\bm{t}')}$ is a shift of $\mathfrak{I}' \mid_{R'(\bm{t}')} $.We say that an interpretation $\mathfrak{I}'$ is a shift of $\mathfrak{I}$ if there is a rational number $q$ such
that $\mathfrak{I} \models M@\varrho$ if and only if $I' \models M@(\varrho + q)$, for each fact $M@\varrho$. It is clear from this definition that interpretation $\mathfrak{I}$ is a shift of another interpretation $\mathfrak{I}'$ if and only if for any relational atom $R'(\bm{t}')$, $\mathfrak{I} \mid_{R'(\bm{t}')}$ is a shift of $\mathfrak{I}' \mid_{R'(\bm{t}')} $. For the definitions of a saturated interpretation, please refer to~\citet{DBLP:conf/aaai/WalegaZWG23}.

We are now ready to prove Theorem~\ref{theorem1}.

\begin{proof}

Note that when $R$ is not an \textit{idb} predicate in $\Pi$, $\Pi'$ will be empty and both canonical models satisfy the same set of facts with predicate $R$, since this set is empty when $R$ is not \textit{edb} for $\Pi$, and when $R$ is \textit{edb}, $\mathcal{D}$ and $\mathcal{D}'$ contains the same set of \textit{edb} facts. Also, for determining whether an interpretation $\mathfrak{I}$ satisfies some relational atoms at each time point $t\in\varrho'$ for a time interval $\varrho'$, considering punctual facts alone is enough. Therefore, we only need to consider queries with \textit{idb} predicates in $\Pi$ and punctual intervals. Next, we prove the equivalence:
\begin{itemize}
    \item \{$\Leftarrow$\}:  
      We prove the if part of Theorem~\ref{theorem1} by an induction on $\mathsf{birth}(g,\Pi',\mathcal{D}')$, where fact $g$'s predicate is an \textit{idb} predicate in $\Pi$ and its interval $t_g$ is punctual. We show that if $\mathsf{birth}(g,\Pi',\mathcal{D}')=i$, then $\mathfrak{I}_i$ satisfies $g$. We capitalise on the fact that each rule $r'$ in $\Pi'$ that derives facts with predicates from the original program $\Pi$ is produced by adding a magic atom to a rule $r$ in $\Pi$.
    \begin{itemize}
        \item \textbf{Basis Step: } When $\mathsf{birth}(g,\Pi',\mathcal{D}')=0$, this case is trivial since such $g$ does not exist. 
        \item \textbf{Induction Step:} When $\mathsf{birth}(g,\Pi',\mathcal{D}')=i$ 
        and $i>0$, there must be a rule $r'\in \Pi'$, a dataset $F'$ satisfied by 
        $\mathfrak{I}_{i-1}'$, a time point $t$ and a substitution $\sigma$ such that 
        $\mathfrak{I}_{F'},t \vDash \sigma(body(r'))$ and for any interpretation 
        $\mathfrak{I}$ such that $\mathfrak{I},t \vDash \sigma(head(r'))$, $\mathfrak{I}$ 
        must satisfy $g$. In other words, $g$ was first derived by $r'$. Let $F$ be the set of all the facts in $F'$ whose predicates are 
        \textit{idb} predicates in $\Pi$, then by the induction hypothesis, 
        $\mathfrak{I}_{i-1}$ satisfies $F$. Let $r$ be the rule from which $r'$ is produced by lines 8--11 in Algorithm~\ref{algo:magic},
        it is clear that $\mathfrak{I}_{F},t \vDash \sigma(body(r))$ and $head(r)=head(r')$.
        Since $\mathfrak{I}_{i-1}$ satisfies $F$, 
        we have $\mathfrak{I}_{i-1},t \vDash \sigma(body(r))$, 
        which in turn means that $\mathfrak{I}_{i},t \vDash \sigma(head(r'))$, 
        whereby $\mathfrak{I}_i$ satisfies $g$.
    \end{itemize}
    \item \{$\Rightarrow$\}: Let $\gamma$ be the adornment of $R$ according to $\bm{t}$. It is obvious that $\forall \sigma$, $\bt{\sigma(\bm{t})}{\gamma)=\bt{\bm{t}}{\gamma}$, since $bt(\bm{t},\gamma}$ is a tuple of constants. Therefore, we only need to prove the following claim:
    \begin{claim}\label{essenceclaim}
        For any fact $g = S(\bm{t}_g)@t_g$, where $t_g$ is a time point and $S$ is an \textit{idb} predicate in $\Pi$, if there is a fact $g_m = m\_S^{\gamma_0}(\bt{\bm{t}_g}{\gamma_0))@t_g$ which has a birth round regarding $(\Pi',\mathcal{D}')$ for some adornment $\gamma_0$, and $g$ has a birth round regarding $(\Pi,\mathcal{D})$, then $g$ also has a birth round regarding $(\Pi',\mathcal{D}'}$.
    \end{claim}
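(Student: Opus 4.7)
The plan is to prove Claim~\ref{essenceclaim} by transfinite induction on the birth round $\alpha = \textbf{birth}(g,\Pi,\mathcal{D})$. The base case $\alpha = 0$ is immediate: $g \in \mathcal{D} \subseteq \mathcal{D}'$, so $g$ has birth round $0$ in $(\Pi',\mathcal{D}')$ as well. For the successor case $\alpha > 0$, $g$ is produced by some rule $r \in \Pi$ under a substitution $\sigma$ at a time point $t^*$ such that $\mathfrak{I}_{<\alpha}, t^* \vDash \sigma(body(r))$; I would handle the representative case $head(r) = \boxplus_{\varrho'} S(\bm{t})$ (the $\boxminus$ case being symmetric), which yields $t_g \in t^* + \varrho'$. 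Since $g_m$ is derivable in $(\Pi',\mathcal{D}')$, the adornment $\gamma_0$ must have been generated during Step 2 of Algorithm~\ref{algo:magic}; hence $\Pi_a$ contains the adorned version of $r$ with head $\boxplus_{\varrho'} S^{\gamma_0}(\bm{t})$, and $\Pi_1'$ (after dropping adornments from non-magic predicates) contains the corresponding rule
\begin{align*}
r': \quad \boxplus_{\varrho'} S(\bm{t}) \gets \diamondplus_{\varrho'} m\_S^{\gamma_0}(bt(\bm{t},\gamma_0)) \wedge body(r).
\end{align*}
The strategy is to fire this $r'$ at $t^*$ under $\sigma$ in $\mathfrak{I}'$. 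The magic body atom is satisfied at once by $g_m$, since $t_g \in t^* + \varrho'$ and $bt(\sigma(\bm{t}),\gamma_0) = bt(\bm{t}_g,\gamma_0)$.

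The remaining obligation is to show that every body atom $M_k$ of $r$ is satisfied at $t^*$ under $\sigma$ in $\mathfrak{I}'$, which I would establish by an inner induction on the reasoning order of $body(r)$. When $M_k$ mentions only \textit{edb} predicates, its supporting ground facts lie in $\mathcal{D} \subseteq \mathcal{D}'$, hence in $\mathfrak{I}'$. Otherwise, I would turn to the rule in $\Pi_2'$ that Algorithm~\ref{algo:magic} builds from $r'$ and $M_k$ via \textbf{MagicHeadAtoms}: its body is the prefix of $body(r')$ strictly preceding $M_k$, which by $g_m$ together with the inner induction hypothesis is already satisfied at $t^*$ in $\mathfrak{I}'$. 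Firing this magic rule places the appropriate metric magic fact into $\mathfrak{I}'$ (for example $\boxplus_\rho m\_R^\gamma(\cdot)$ when $M_k$ is $\boxplus_\rho R(\cdot)$), which entails punctual magic facts $m\_R^\gamma(bt(\sigma(\bm{t}_R),\gamma))@t''$ at precisely those time points $t''$ where the semantics of $M_k$ at $t^*$ demands the underlying \textit{idb} ground fact $R(\sigma(\bm{t}_R))@t''$. Each such punctual fact has birth round strictly below $\alpha$ in $(\Pi,\mathcal{D})$, so the outer induction hypothesis, applied with the freshly derived magic fact, places it in $\mathfrak{I}'$. Re-assembling these facts via the semantics of $M_k$ then yields $\mathfrak{I}', t^* \vDash \sigma(M_k)$, which closes the inner induction and licenses the firing of $r'$ to deposit $g$ in $\mathfrak{I}'$ at a finite birth round.

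The main obstacle that I anticipate is verifying, for each possible shape of $M_k$, that the time-point range of the magic fact produced by \textbf{MagicHeadAtoms} covers every time point required by the semantics of $M_k$ at $t^*$, and that the adornment-driven argument tuples survive the substitution $\sigma$. The $\boxplus_\rho$, $\boxminus_\rho$, $\diamondplus_\rho$, and $\diamondminus_\rho$ cases reduce to direct interval computations. The $\mathcal{U}_\varrho$ and $\mathcal{S}_\varrho$ cases will be more delicate: the witness time $t_1$ for the right operand ranges over $t^* + \varrho$, while the left operand must hold throughout the open interval $(t^*, t_1)$, so one must check that the heads $\boxplus_{[0,t_{max}]}$ (respectively $\boxminus_{[0,t_{max}]}$) chosen by Algorithm~\ref{algo:MagicHeadAtoms} for the left operand, together with $\boxplus_\varrho$ (respectively $\boxminus_\varrho$) for the right operand, provide a conservative enclosure that is just large enough to supply every required underlying fact. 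A careful case analysis will close this step and complete the transfinite induction.
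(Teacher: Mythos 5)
Your proposal follows essentially the same route as the paper's proof: an outer induction on $\textbf{birth}(g,\Pi,\mathcal{D})$, an inner induction along the reasoning order of $body(r)$ that fires the second-type magic rules (whose bodies are the magic atom plus the prefix preceding the current atom) to obtain magic facts whose intervals cover all time points required by that atom, then the outer induction hypothesis applied to the resulting punctual subfacts, and finally the firing of the first-type rule, whose $\diamondplus_{\varrho'}/\diamondminus_{\varrho'}$ magic atom is satisfied by $g_m$, to deposit $g$. The per-operator interval checks you defer---in particular the $\mathcal{U}/\mathcal{S}$ enclosures via $\boxplus_{[0,t_{max}]}$ and $\boxminus_{[0,t_{max}]}$ for the left operand---are exactly the case analysis the paper carries out (using its $Link$ construction), and they go through as you anticipate.
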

    Because for any punctual fact $q'=R(\bm{t}')@t_{q'}$ satisfied by $\mathfrak{C}_{\Pi,\mathcal{D}}$ such that $t_{q'}\in\varrho$ and there is a substitution $\sigma$ for which $\sigma(\bm{t})=\bm{t}'$, we have already made sure that $\mathfrak{I}_0'$ satisfy its corresponding $q'_m$ mentioned in Claim~\ref{essenceclaim} by line 3 in Algorithm~\ref{algo:magic}, whereby if Claim~\ref{essenceclaim} stands, $q'$ is satisfied by $\mathfrak{C}_{\Pi',\mathcal{D}'}$. We prove Claim~\ref{essenceclaim} by an induction on $\mathsf{birth}(g,\Pi,\mathcal{D})$.
    \begin{itemize}
        \item \textbf{Basis Step: } when $\mathsf{birth}(g,\Pi,\mathcal{D})=1$,  there must be a rule $r\in \Pi$, a dataset $F$ satisfied by $\mathfrak{I}_\mathcal{D}$, a time point $t$ and a substitution $\sigma$ such that $\mathfrak{I}_{F},t \vDash \sigma(body(r))$ and for any interpretation $\mathfrak{I}$ such that $\mathfrak{I},t \vDash \sigma(head(r))$, $\mathfrak{I}$ must satisfy $g$. 
        Since $\mathcal{D}\subset \mathcal{D}'$, $\mathfrak{I}_{\mathcal{D}'}$ also satisfies $F$. 
        Let $r'$ be the rule produced by adding an atom containing $m\_S^{\gamma_0}$ to $body(r)$ in line 10, Algorithm~\ref{algo:magic}, then $\mathfrak{I}_{F\cup\{g_m\}},t\vDash body(r')$ and $head(r')=head(r)$, whereby since $\mathfrak{I}_{0}'=\mathfrak{I}_{\mathcal{D}}'$ satisfies $F\cup\{g_m\}$, we have $\mathfrak{I}_1',t \vDash head(r)$, and thus $\mathfrak{I}_1'$ satisfies $g$.
        \item \textbf{Induction Step: } when $\mathsf{birth}(g,\Pi,\mathcal{D})=n$ and $n>1$, there must be a rule $r\in \Pi$, a dataset $F'$ satisfied by $\mathfrak{I}_{n-1}$, a time point $t'$ and a substitution $\sigma$ such that $\mathfrak{I}_{F'},t \vDash \sigma(body(r))$ and for any interpretation $\mathfrak{I}$ such that $\mathfrak{I},t \vDash \sigma(head(r))$, $\mathfrak{I}$ must satisfy $g$. Let $r$ be a follows
        \begin{align*}
            \boxplus_\varrho S(\bm{t}_0) \xleftarrow{}\ M_1,M_2,...,M_l.
        \end{align*}
        We allow $\varrho$ to contain non-negative rationals while $inf(\varrho)*sup(\varrho)\geq 0$, 
        whereby we don't need to consider $\boxminus$ in the rule head. Let the predicates in $body(r)$ be $R_1,R_2,...,R_k$ according to their reasoning order, 
        with the tuples associated with them being $\bm{t}_1,\bm{t}_2,...,\bm{t}_k$ in $body(r)$. 
        Since $\mathfrak{I}_{F},t \vDash \sigma(body(r))$, $F'$ must contain a subset $F = \{f_j\ |\  f_j = R_j(\bm{t}_j')@\varrho_{f_j},j\in \{1,2,...,k\}\}$, and for $F$, $\sigma$ and the time point $t'$ who must be in $t_g-\varrho$, these conditions are met:
        \begin{itemize}
            \item Each fact $f_j$ either has an \textit{edb} predicate or has $\mathsf{birth}(f_j,\Pi,\mathcal{D})<\mathsf{birth}(g,\Pi,\mathcal{D})$.
            \item If $R_j$ belongs to a metric atom $\boxplus_{\varrho_{m_h}} R_j(\bm{t}_j)$, then $t' + \varrho_{m_h} = \varrho_{f_j}$. $t' - \varrho_{m_h} = \varrho_{f_j}$ should hold instead if the operator is $\boxminus$.
            \item If $R_j$ belongs to a metric atom $\diamondplus_{\varrho_{m_h}} R_j(\bm{t}_j)$, then $\varrho_{f_j}$ is punctual and $\varrho_{f_j} \in t'+\varrho_{m_h}$. $\varrho_{f_j} \in t' - \varrho_{m_h}$ should hold instead if the operator is $\diamondminus$.
            \item If $R_j$ and $R_{j-1}$ belongs to a metric atom $R_j(\bm{t_j}) \mathcal{S}_{\varrho_{m_h}} R_{j-1}(\bm{t_{j-1}})$, then $\varrho_{f_{j-1}}$ is punctual and $\varrho_{f_{j-1}} \in t' - \varrho_{m_h}$ and $\varrho_{f_j} = Link(\varrho_{f_{j-1}},\{t'\})$. $\varrho_{f_{j-1}} \in t' + \varrho_{m_h}$ should hold instead if the operator is $\mathcal{U}$.
            \item $\sigma(\bm{t}_0)=\bm{t}_g$, $\sigma(\bt{\bm{t}_0}{\gamma_0))=\bt{\bm{t}_g}{\gamma_0}$ and $\sigma(\bm{t}_j} = \bm{t}_j',\forall j \in \{1,2,...,k\}$
        \end{itemize}
        Since $g_m$ has a birth round regarding $(\Pi',\mathcal{D}')$ by the induction hypothesis, we only need to prove all facts in $F$ are either in \textit{edb} or has a birth round regarding $(\Pi',\mathcal{D}')$. The cases where $R_j$ is an \textit{edb} predicate are trivial, whereby we only need to consider when it's an \textit{idb} predicate. We prove this inductively on index $j$ of the facts in $F$.
        \begin{itemize}
            \item \textbf{Basis Step:} if $R_1$ is an \textit{idb} predicate, for the first metric atom $M_1$, 
            the rule $r_{1}$ generated for $R_1(\bm{t}_1)$ by lines 12--18 in Algorithm~\ref{algo:magic} contains only
            $\diamondplus_\varrho m\_S^{\gamma_0}(\bt{\bm{t}_0}{\gamma_0)}$ 
            in the body and has relational atom 
            $m\_R_1^{\gamma_1}(\bm{s}_1)$ 
            in the head for another adornment 
            $\gamma_1$, where 
            $\bm{s}_1 = \bt{\bm{t}_1}{\gamma_1}$. 
            Since 
            $\mathfrak{I}_{\{g_m\}},t' \vDash \diamondplus_\varrho m\_S^{\gamma_0}(\bt{\bm{t}_0}{\gamma_0)}$,
            we have 
            $\mathfrak{I}_{\mathsf{birth}(g_m,\Pi',\mathcal{D}')}',t' \vDash \sigma(body(r_1))$ 
            and 
            $\mathfrak{I}_{\mathsf{birth}(g_m,\Pi',\mathcal{D}')+1}',t' \vDash \sigma(head(r_1))$. Because 
            $\mathfrak{I}_{\mathsf{birth}(g_m,\Pi',\mathcal{D}')+1}',t' \vDash \sigma(head(r_1))$, interpretation 
            $\mathfrak{I}_{\mathsf{birth}(g_m,\Pi',\mathcal{D}')+1}'$ 
            must satisfy a fact 
            $g_1=m\_R_1^{\gamma_1}(\bm{t}_{g_1})@\varrho_{g_1}$ such that $\sigma(\bm{s}_1) = \bm{t}_{g_1}$ 
            and according to Algorithm~\ref{algo:MagicHeadAtoms} which determines $head(r_1)$,
            \begin{itemize}
                \item $\varrho_{g_1} = t' + \varrho_{m_1}$, 
                if $M_1$ is 
                $\boxplus_{\varrho_{m_1}} R_1(\bm{t}_1)$, $\diamondplus_{\varrho_{m_1}} R_1(\bm{t}_1)$ or $R_2(\bm{t}_2)\ \mathcal{U}_{\varrho_{m_1}}R_1(\bm{t}_1)$. In this case, since $\varrho_{f_1} = t' + \varrho_{m_1}$, we have $\varrho_{f_1} \subseteq \varrho_{g_1}$
                \item $\varrho_{g_1} = t' - \varrho_{m_1}$, 
                if $M_1$ is 
                $\boxminus_{\varrho_{m_1}} R_1(\bm{t}_1)$, $\diamondminus_{\varrho_{m_1}} R_1(\bm{t}_1)$ 
                or $R_2(\bm{t}_2)\ \mathcal{S}_{\varrho_{m_1}}R_1(\bm{t}_1)$. 
                Similarly, we can deduce $\varrho_{f_1} \subseteq \varrho_{g_1}$.
            \end{itemize}
            Because $\bm{t}_{g_1} = \sigma(\bm{s}_1)= \sigma(\bt{\bm{t}_1}{\gamma_1))= \bt{\bm{t}_1'}{\gamma_1}$ and $\varrho_{f_1} \subseteq \varrho_{g_1}$, $\mathfrak{I}_{\mathsf{birth}(g_m,\Pi',\mathcal{D}'}+1}'$ 
            satisfies $m\_R_1^{\gamma_1}(\bt{\bm{t}_1'}{\gamma_1)}@\varrho_{f_1}$,
            and we have that for each fact $f_1'=R_1(\bm{t}_1')@t_1'$ 
            with $t_1'\in \varrho_{f_1}$, there is a fact 
            $f_{m_1}'=m\_R_1^{\gamma_1}(\bt{\bm{t}_1'}{\gamma_1)}@t_1'$, 
            for which $\mathsf{birth}(f_{m_1}',\Pi',\mathcal{D}')\leq \mathsf{birth}(g_m,\Pi',\mathcal{D}')+1$. 
            Since $\mathsf{birth}(f_1,\Pi,\mathcal{D})<n$, we have $\mathsf{birth}(f_1',\Pi,\mathcal{D})<n$ and by the induction hypothesis we have there is a birth round for $f_1'$ regarding $(\Pi',\mathcal{D}')$ 
            and in turn, $\mathsf{birth}(f_1,\Pi',\mathcal{D}')$ is the largest $\mathsf{birth}(f_1',\Pi',\mathcal{D}')$ for any $f_1'$.
            \item \textbf{Induction Step:} if $R_j$ is an \textit{idb} predicate, Let $max_{j-1}$ be the largest integer among the birth rounds of 
            $g_m,f_1,f_2,...,f_{j-1}$ regarding $(\Pi',\mathcal{D}')$. 
            Similar to the base case, we have the rule $r_j$ generated by lines 12--18 in Algorithm~\ref{algo:magic} for $R_j(\bm{t}_j)$ contains $\diamondplus_\varrho m\_S^{\gamma_0}(\bt{\bm{t}_0}{\gamma_0)}$ plus 
            $M_1,M_2,...,M_{h-1}$ in the body, where $M_h$ contains $R_j(\bm{t}_j)$, 
            and $r_j$ contains $m\_R_j^{\gamma_j}(\bm{s}_j)$ in the head, 
            where $\bm{s}_j = \bt{\bm{t}_j}{\gamma_j}$. 
            Let $F_{j}=\{f_o | o\in\{1,2,...,j\}\}$, 
            then $\mathfrak{I}_{F_{j-1}\cup\{g_m\}}',t' \vDash \sigma(body(r_j))$,
            whereby we have that $\mathfrak{I}_{max_{j-1}}',t'\vDash \sigma(body(r_j))$,
            and $\mathfrak{I}_{max_{j-1}+1}',t'\vDash \sigma(head(r_j))$. Because $\mathfrak{I}_{max_{j-1}+1},t'\vDash head(r_j)$, interpretation $\mathfrak{I}_{max_{j-1}+1}$ must satisfy a fact $g_j=m\_R_j^{\gamma_j}(\bm{t}_{g_j})@\varrho_{g_j}$, such that $\sigma(\bm{s}_j)=\bm{t}_{g_j}$ and according to Algorithm~\ref{algo:MagicHeadAtoms} which determines $head(r_j)$,

            \begin{itemize}
                \item $\varrho_{g_j} = t' + \varrho_{m_h}$, 
                if $R_j$ is in
                $M_h = \boxplus_{\varrho_{m_h}} R_j(\bm{t}_j)$,
                $\diamondplus_{\varrho_{m_h}} R_j(\bm{t}_j)$ or $R_{j+1}(\bm{t}_{j+1})\mathcal{U}_{\varrho_{m_h}}R_j(\bm{{t}_j})$. 
                In this case, since 
                $\varrho_{f_j} = t' + \varrho_{m_h}$, 
                we have $\varrho_{f_j} \subseteq \varrho_{g_j}$
                \item $\varrho_{g_j} = t' - \varrho_{m_h}$, 
                if $R_j$ is in
                $M_h = \boxminus_{\varrho_{m_h}} R_j(\bm{t}_j)$, $\diamondminus_{\varrho_{m_h}} R_j(\bm{t}_j)$ 
                or $R_{j+1}(\bm{t}_{j+1})\mathcal{S}_{\varrho_{m_h}}R_j(\bm{t}_{j})$. 
                Similarly, we can deduce $\varrho_{f_j} \subseteq \varrho_{g_j}$.
                \item $\varrho_{g_j} = Link(\{t'\},t' - \varrho_{m_h})$, 
                if $R_j$ is in 
                $M_h = R_j(\bm{t}_{j})\mathcal{S}_{\varrho_{m_h}}R_{j-1}(\bm{t}_{j-1})$.
                In this case, 
                $\varrho_{f_j} = Link(\{t'\},\varrho_{f_{j-1}}) \subseteq Link(\{t'\},t' - \varrho_{m_h}) =  \varrho_{g_j}$. 
                Note this is because $\varrho_{f_{j-1}} \in t' - \varrho_{m_h}$.
                \item $\varrho_{g_j} = Link(\{t'\},t' + \varrho_{m_h})$, 
                if $R_j$ is in 
                $M_h = R_j(\bm{t}_{j})\mathcal{S}_{\varrho_{m_h}}R_{j-1}(\bm{t}_{j-1})$. 
                We again derive $\varrho_{f_j} \subseteq \varrho_{g_j}$.
            \end{itemize}
            Analogous to the base case, since $\bm{t}_{g_j}=\sigma(\bm{s}_j) = \sigma(\bt{\bm{t}_j}{\gamma_j)) = bt(\bm{t}_j',\gamma_j}$, 
            we have that $\mathfrak{I}_{max_{j-1}+1}'$ satisfies $m\_R_j^{\gamma_j}(\bt{\bm{t}_j'}{\gamma_j)}@\varrho_{f_j}$,
            and that for each $f_j'=R_j(\bm{t}_j')@t_j'$ with $t_j'\in\varrho_{f_j}$, there is a fact $f_{m_j}'$ for $f_j'$ such that 
            $f_{m_j}'=m\_R_j^{\gamma_j}(\bt{\bm{t}_j'}{\gamma_j)}@t_j'$ 
            and $\mathsf{birth}(f_{m_j}',\Pi',\mathcal{D}')\leq max_{j-1}+1$. 
            Since $\mathsf{birth}(f_j,\Pi,\mathcal{D})<n$, 
            we have $\mathsf{birth}(f_j',\Pi,\mathcal{D})<n$, 
            and by the induction hypothesis we have there is a birth round for $f_j'$ regarding 
            $(\Pi',\mathcal{D}')$ and in turn, $\mathsf{birth}(f_j,\Pi',\mathcal{D}')$ is the largest $\mathsf{birth}(f_j',\Pi',\mathcal{D}')$ for any $f_j'$.
            
        \end{itemize}
    \end{itemize}
\end{itemize} 
\end{proof}

Next, we deal with Theorem~\ref{migrationProp}

First, we propose a more general theorem.

Let $add(\mathfrak{I},R(\bm{t})@\varrho)$ be the least interpretation that contains $\mathfrak{I}$ and entails $R(\bm{t})@\varrho$. 
Let function $toRule(R(\bm{t})@(-\infty,+\infty))$ mapping a fact with interval $(-\infty,+\infty)$ to a rule $r$ be such that $r$ is
\begin{align*}
    R(\bm{t}) \gets \top.
\end{align*}
For a dataset $\mathcal{D}$, let $unbounded(\mathcal{D})$ be the set of facts in $\mathcal{D}$ that have interval $(-\infty,+\infty)$ and let a funtion $swap$ mapping a dataset to a program be such that
\begin{align*}
    swap(\mathcal{D}) = \{toRule(g) \mid g \in unbounded(\mathcal{D})\}
\end{align*}

\begin{theorem}\label{adapttheo}
    For bounded a DatalogMTL program-dataset pair $(\Pi,\mathcal{D})$ and a set $E$ of facts with interval $(-\infty,+\infty)$ , with $R$ an \textit{idb} predicate in $\Pi$, let $\Pi'=swap(E)\cup \Pi$, $\mathcal{D}'=\mathcal{D}\cup E$, $\mathfrak{I}_0,\mathfrak{I}_1,...,$ and $\mathfrak{I}_0',\mathfrak{I}_1',...,$ be the transfinite sequences of interpretations defined for $\mathfrak{I}_{\mathcal{D}}$ and $\mathfrak{I}_\mathcal{D'}$ by the immediate consequence operators $T_\Pi$ and $T_{\Pi'}$, respectively. Then, for each $i=0,1,2\dots$, we have $\mathfrak{I}_i' = \mathfrak{I}_{i+1}$.
\end{theorem}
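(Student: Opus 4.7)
The plan is to prove the claim by induction on $i$, relying on the observation that each rule in $swap(E)$ has the form $R(\bm{t}) \gets \top$ and so fires unconditionally at every time point, deriving the corresponding unbounded fact on a single application of the immediate consequence operator containing it. Consequently, the rules in $swap(E)$ and the facts in $E$ are interchangeable after exactly one step of rule application, which accounts for the expected one-step offset $\mathfrak{I}_i' = \mathfrak{I}_{i+1}$.

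For the base case, I would establish $\mathfrak{I}_1 = \mathfrak{I}_0' = \mathfrak{I}_{\mathcal{D}'}$ by decomposing the operator as $T_{\Pi'}(\mathfrak{I}) = T_\Pi(\mathfrak{I}) \cup T_{swap(E)}(\mathfrak{I})$. The contribution $T_{swap(E)}(\mathfrak{I}_\mathcal{D})$ adds exactly the facts in $E$, since every rule in $swap(E)$ has body $\top$ and hence fires at every time point, so $\mathfrak{I}_\mathcal{D} \cup E \subseteq \mathfrak{I}_1$. The reverse inclusion reduces to $T_\Pi(\mathfrak{I}_\mathcal{D}) \subseteq \mathfrak{I}_\mathcal{D} \cup E$; this holds in the intended magic-sets setting because every rule of $\Pi$ has a magic atom in its body whose predicate appears only in $E$ (not in $\mathcal{D}$), so no rule of $\Pi$ can fire before $E$ is present.

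For the inductive step, assuming $\mathfrak{I}_i = \mathfrak{I}_{i-1}'$, I would compute $\mathfrak{I}_{i+1} = T_{\Pi'}(\mathfrak{I}_i) = T_\Pi(\mathfrak{I}_{i-1}') \cup T_{swap(E)}(\mathfrak{I}_{i-1}')$. Since $\mathfrak{I}_{i-1}' \supseteq \mathfrak{I}_0' \supseteq E$, the $T_{swap(E)}$ contribution merely re-derives facts that are already present and is therefore absorbed, leaving $\mathfrak{I}_{i+1} = T_\Pi(\mathfrak{I}_{i-1}') = \mathfrak{I}_i'$ and closing the induction.

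The main obstacle I anticipate is the base case, specifically the inclusion $T_\Pi(\mathfrak{I}_\mathcal{D}) \subseteq \mathfrak{I}_\mathcal{D} \cup E$: this rests on the structural property, implicit in the hypothesis that the relevant predicates are idb in $\Pi$, that predicates introduced via $E$ act as gatekeepers for every rule of $\Pi$, so that no derivation fires until $E$ is in place. Making this condition precise in full generality — rather than leaning solely on the specific magic-sets instantiation arising from Theorem~\ref{migrationProp} — is where the argument will need the most care; once that step is pinned down, the remainder of the induction is essentially mechanical.
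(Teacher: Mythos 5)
Your proposal follows essentially the same route as the paper's proof: induct on $i$; in the base case a single application of the consequence operator makes the rules of $swap(E)$ reproduce exactly the facts of $E$, giving $\mathfrak{I}_1=\mathfrak{I}_{\mathcal{D}'}=\mathfrak{I}_0'$; in the inductive step, since $E$ is already satisfied, the $swap(E)$ rules contribute nothing new, so $T_{\Pi'}$ and $T_\Pi$ coincide on the interpretations at hand and the one-step offset propagates. (You also adopted the intended pairing of operators---$T_{\Pi'}$ driving $\mathfrak{I}_0,\mathfrak{I}_1,\dots$ from $\mathfrak{I}_{\mathcal{D}}$ and $T_\Pi$ driving $\mathfrak{I}_0',\mathfrak{I}_1',\dots$ from $\mathfrak{I}_{\mathcal{D}'}$---which is what the paper's proof and its use in Theorem~\ref{migrationProp} require, even though the ``respectively'' in the statement swaps them.) The obstacle you single out in the base case is genuine, but it is not a defect of your argument relative to the paper's: the paper's proof simply asserts that in the first application of $T_{\Pi'}$ only rules in $swap(E)$ have their bodies satisfied, which does not follow from the hypotheses of Theorem~\ref{adapttheo} as stated---for an arbitrary bounded $(\Pi,\mathcal{D})$ a rule of $\Pi$ may already fire on $\mathfrak{I}_{\mathcal{D}}$, in which case $\mathfrak{I}_1\neq\mathfrak{I}_0'$. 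The claim holds under exactly the gatekeeper property you identify (every rule of $\Pi$ has a body atom whose predicate occurs in $E$ but not in $\mathcal{D}$, so $T_\Pi(\mathfrak{I}_{\mathcal{D}})=\mathfrak{I}_{\mathcal{D}}$), which is the situation produced by Algorithm~\ref{algo:magic} and exploited in Theorem~\ref{migrationProp}, where every rule of the magic program carries a magic body atom and $\mathcal{D}$ contains no magic facts. So your conditional treatment of the base case makes explicit an assumption that the theorem's statement omits and the paper's proof uses tacitly; apart from pinning that hypothesis down, your induction is the paper's argument.
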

\begin{proof}
    We prove Theorem~\ref{adapttheo} by an induction on $i$ for $\mathfrak{I}_i' = \mathfrak{I}_{i+1}$.
    \begin{itemize}
        \item \textbf{Basis Step}: Since in the first application of $T_{\Pi'}$, only rules in $swap(E)$ have their body satisfies, this makes $\mathfrak{I}_1$ satisfy every fact in $E$, and we have $\mathfrak{I}_1$ satisfies every fact that $ \mathfrak{I}_0'$ does and vice versa, whereby $\mathfrak{I}_1= \mathfrak{I}_0'$
        \item \textbf{Induction Step}: Since $\mathfrak{I}_1$ already satisfies $E$, for $i\geq 1$, no fact that is satisfied by $\mathfrak{I}_{i+1}$ but not $\mathfrak{I}_{i}$ can have the same relational atom as any fact in $E$. This in turn means that $\mathfrak{I}_{i+1} = T_{\Pi'}(\mathfrak{I}_{i}) = T_{\Pi}(\mathfrak{I}_{i})$, for any $i\geq 1$. By the induction hypothesis we have $\mathfrak{I}_j' = \mathfrak{I}_{j+1}$ for $j\geq 0$, and $\mathfrak{I}_{j+2} = T_{\Pi'}(\mathfrak{I}_{j+1}) =T_{\Pi}(\mathfrak{I}_{j+1}) = T_{\Pi}(\mathfrak{I}_{j}') = \mathfrak{I}_{j+1}'$
    \end{itemize}
\end{proof}

Next, we use Theorem~\ref{adapttheo} to prove Theorem~\ref{migrationProp}.

\begin{proof}
    Observe that the program $\Pi'$ constructed by our Algorithm~\ref{algo:magic} is bounded and the only  unbounded fact in $\mathcal{D}'$ is $q_m' = m\_Q^{\gamma_0}(\bt{\bm{t}}{\gamma_0))@(-\infty, +\infty}$.
    Let $\Pi''=\Pi'\cup swap(\{q_m'\})$, and let 
    $\mathfrak{I}_0,\mathfrak{I}_1,...,$ and $\mathfrak{I}_0',\mathfrak{I}_1',...,$ be the transfinite sequences of interpretations defined for $\mathfrak{I}_{\mathcal{D}}$ and $\mathfrak{I}_\mathcal{D'}$ by the immediate consequence operators $T_{\Pi''}$ and $T_{\Pi'}$, respectively. Then by Theorem~\ref{adapttheo} we have that for each $i=0,1,2\dots$, $\mathfrak{I}_i' = \mathfrak{I}_{i+1}$. Because $(\Pi'', \mathcal{D})$ are bounded, there is some $k\geq 0$ for which $\mathfrak{I}_{k+1}$ is a saturated interpretation, which means $\mathfrak{I}_k'$ is also a saturated interpretation, considering that the rule $ m\_Q^{\gamma_0}(\bt{\bm{t}}{\gamma_0))\gets \top$ in $\Pi''$ is satisfied by every  interpretation $\mathfrak{I}_i$ with $i\geq 1$. This guarantees that application of Algorithm 1 by~\citet{DBLP:conf/aaai/WalegaZWG23} to $(\Pi',\mathcal{D}',q}$ terminates after $k$ application of $T_{\Pi'}$.
\end{proof}
\end{document}